\newtheorem{proposition}{Proposition}
\definecolor{cvprblue}{rgb}{0.21,0.49,0.74}
\definecolor{somegray}{rgb}{0.5, 0.5, 0.5}
\newcommand{\darkgrayed}[1]{\textcolor{somegray}{#1}}
\newcommand*\titleheader[1]{\gdef\@titleheader{#1}}
	\let\st@red@title\@title
	\def\@title{%
		\vskip-3.5em
		\bgroup\normalfont\large\centering\@titleheader\par\egroup
		\vskip1.5em\st@red@title}
\def\A{\mathbf{A}}
\def\B{\mathbf{B}}
\def\C{\mathbf{C}}
\def\d{\mathbf{d}}
\def\e{\mathbf{e}}
\def\f{\mathbf{f}}
\def\I{\mathbf{I}}
\def\L{\mathbf{L}}
\def\m{\mathbf{m}}
\def\M{\mathbf{M}}
\def\N{\mathbf{N}}
\def\n{\mathbf{n}}
\def\R{\mathbf{R}}
\def\u{\mathbf{u}}
\def\v{\mathbf{v}}
\def\x{\mathbf{x}}
\def\omg{\boldsymbol{\omega}}
\def\tht{\boldsymbol{\theta}}
\DeclareMathOperator*{\argmin}{arg\,min}
\DeclareMathOperator*{\rank}{rank}
\newcommand{\scenario}[1]{{\small \sf#1}\xspace}
\newcommand\blfootnote[1]{%
	\begingroup
	\renewcommand\thefootnote{}\footnote{#1}%
	\addtocounter{footnote}{-1}%
	\endgroup
}
\title{Full-DoF Egomotion Estimation for Event Cameras Using Geometric Solvers}
\author{
Ji Zhao\textsuperscript{1} \quad Banglei Guan\textsuperscript{2(\Letter)} \quad Zibin Liu\textsuperscript{2} \quad  Laurent Kneip\textsuperscript{3}\\
	{\small$^{1}$Independent Researcher, Beijing, China.} 
	{\small$^{2}$College of Aerospace Science and Engineering,} \\ {\small National University of Defense Technology, China.} 
	{\small$^{3}$Mobile Perception Lab, ShanghaiTech University, China.} \\
}
\begin{document}
\maketitle
\begin{abstract}
For event cameras, current sparse geometric solvers for egomotion estimation assume that the rotational displacements are known, such as those provided by an IMU. Thus, they can only recover the translational motion parameters. Recovering full-DoF motion parameters using a sparse geometric solver is a more challenging task, and has not yet been investigated. In this paper, we propose several solvers to estimate both rotational and translational velocities within a unified framework. Our method leverages event manifolds induced by line segments. The problem formulations are based on either an incidence relation for lines or a novel coplanarity relation for normal vectors. We demonstrate the possibility of recovering full-DoF egomotion parameters for both angular and linear velocities without requiring  extra sensor measurements or motion priors. To achieve efficient optimization, we exploit the Adam framework with a first-order approximation of rotations for quick initialization. Experiments on both synthetic and real-world data demonstrate the effectiveness of our method. The code is available at \url{https://github.com/jizhaox/relpose-event}.
\end{abstract}

\blfootnote{\Letter: Corresponding author. \\ Email: \texttt{zhaoji84@gmail.com}, \ \texttt{guanbanglei12@nudt.edu.cn}, \texttt{liuzibin19@nudt.edu.cn}, \ \texttt{lkneip@shanghaitech.edu.cn}}
\section{Introduction}
\label{sec:intro}

In recent years, neuromorphic event-based cameras have drawn increasing attention~\cite{gallego2022event}. They capture pixel-wise intensity changes and generate  asynchronous event streams indicating relative brightness changes, and thereby create favorable bandwidth latency trade-offs.
The present paper looks at the fundamental task of event-based motion and structure estimation.

Compared against traditional frame-based cameras, event cameras capture a different encoding of visual information. As a result, new algorithms need to be developed to process event streams. 
The recovery of egomotion for event cameras has for example been achieved by aligning warped events~\cite{gallego2017accurate,gallego2018unifying,gallego2019focus}, exploiting normal flow~\cite{shiba2024secrets,lu2024event,ren2024motion}, coupling with frame-based cameras or an inertial measurement unit (IMU)~\cite{zhu2017event,mueggler2018continuous}, and by using learning-based methods~\cite{zhu2019unsupervised,gehrig2020event,ye2020unsupervised,paredes-valles2021back,klenk2023deep,pellerito2024deep}.

Of particular interest to the present work are sparse geometric solvers for motion estimation from samples of raw events. The most relevant work to ours is given by Gao \etal \cite{gao2023fivepoint,gao2024npoint}, who introduce a manifold for line events, called \emph{eventail}.
Based on the theoretical foundation of eventails and the assumption of locally constant velocity, temporal asynchronous events can be used for linear velocity estimation. 
Gao \etal develop a minimal solver to recover the linear velocity of the event camera and line structures. By eliminating the scale-ambiguity of translation and scene structure, the problem has $5$ unknown DoFs, thus requiring a minimum of $5$ events.
Later, an efficient $N$-point linear solver was developed~\cite{gao2024npoint}. 
However, in the aforementioned methods, angular velocities are measured by an IMU. The direct estimation of rotational parameters from events remains yet to be explored.

Despite their success, previous mainstream methods estimate either rotational parameters, as seen in \cite{gallego2017accurate, gallego2018unifying,gallego2019focus}, or translational parameters, as in \cite{gao2023fivepoint,gao2024npoint,lu2024event}.
By contrast, this paper estimates egomotion containing both rotational and translational parameters. 
To achieve this goal, we propose a full-DoF egomotion estimation method using the geometry of eventail manifolds~\cite{gao2024npoint}. 
When the unknown angular velocity is involved, the minimum number of events required increases to $8$, making the problem more complex compared to the case where rotational displacement is known.
Besides this direct extension, we also develop a novel geometry for eventail manifolds relying on the normal flow and yielding good overall performance.

Though this paper focuses on egomotion estimation using event streams, the proposed geometry and formulations can be understood as a general framework for infinitesimal motion estimation. Whenever a visual sensor is moving under a locally constant velocity and samples (either asynchronously or synchronously) from image gradients or normal flow of edgelets are generated, the present framework can be used to recover angular and linear velocities. This framework naturally supports event cameras due to its asynchronous and differential working principle. 
When applied to frame-based cameras, the frame rate must be sufficiently high to accurately capture egomotion, due to the synchronous nature of sampling.
The contributions of this paper are summarized as follows.
\begin{itemize}
	\item {\bf Geometry}: We propose a novel coplanarity relation for the eventail manifold. This foundational geometric theory differs from the incidence relation proposed in previous methods \cite{gao2023fivepoint,gao2024npoint}, and promises a more reliable egomotion estimation solution with strong potential in a wide range of applications.
	\item {\bf Formulations of Full-DoF Motion Estimation}: We develop several sparse geometric solvers that solve for full-DoF egomotion parameters. They are the first of its kind.
	The problem formulations are based on either an incidence relation for lines or a novel coplanarity relation for plane normals.
	All resulting optimization problems aim to estimate both the rotational and translational parameters, i.e., the angular and linear velocities.
	\item {\bf Optimization}: We have employed the Adam optimization framework~\cite{kingma2015adam} to address the resulting optimization problems. To further enhance efficiency, we have employed a first-order approximation for rotations alongside a \emph{data compression} technique~\cite{helmke2007essential,kneip2013direct}.
	\item {\bf Theoretical Analysis}: We have conducted a theoretical analysis of pure rotation cases, an area that has not been explored in geometric solvers. Our method not only identifies but also handles pure rotation scenarios effectively
\end{itemize}

\section{Related Work}
\label{sec:relwork}

Most existing works simplify the egomotion estimation problem by assuming pure rotational motion and constant angular velocity over short time intervals~\cite{gallego2017accurate,gallego2018unifying,gallego2019focus,liu2020globally,peng2021globally,gu2021spatio,liu2021spatiotemporal,guo2024cmax}.
Moreover, some works recover linear velocity only~\cite{ieng2017event,lu2024event,gao2023fivepoint,gao2024npoint} under the assumption of known angular velocities.
A few works recover both angular and linear velocities events with depth~\cite{huang2023progressive,ren2024motion} or stereo event cameras~\cite{zhu2019unsupervised}. 
An angular velocity estimator for planar ground vehicles was proposed in~\cite{xu2024event}.
A continuous-time framework for event-inertial fusion is presented in~\cite{mueggler2018continuous}.  
In \cite{ren2024motion}, sparse geometric solvers are used to obtain an initial egomotion estimation, followed by continuous-time refinement.

Contrast maximization (CMax) \cite{gallego2017accurate,gallego2018unifying,gallego2019focus,shiba2024secrets} provides a unified framework to solve a range of problems with event cameras, including egomotion, depth and optical flow estimation. The CMax framework has been improved in terms of efficiency and robustness~\cite{gu2021spatio,liu2021spatiotemporal,huang2023progressive,hamann2024motion}. 
In \cite{liu2020globally,peng2021globally}, globally optimal solutions have been proposed for the CMax framework.
In \cite{seok2020robust}, the constant velocity assumption was dropped and B{\'e}zier curve was estimated for linear interpolation.
The dispersion minimization (DMin) framework \cite{nunes2022robust}  minimizes dispersion of features associated with raw events directly in feature space.

Line extraction methods from event streams have been proposed~\cite{brandli2016elised,everding2018low}. Additionally, the observation of line events has been used to estimate camera motion via visual geometry~\cite{kim2014simultaneous,legentil2020idol}.
In \cite{peng2021continuous}, a closed-form linear velocity initialization method was developed based on the trifocal tensor geometry. 
In \cite{gao2023fivepoint,gao2024npoint}, solvers are proposed for estimating linear velocity and line parameters. 
These solvers do not estimate angular velocity and assume that rotations are provided by IMU measurements.

It is well known that scene depth and egomotion determine the optical flow~\cite{longuet1980interpretation}. Thus, optical flow provides constraints for scene depth and egomotion~\cite{barranco2021joint}. The normal flow can be obtained based on fitting local spatio-temporal planes on the time surface~\cite{benosman2013event,lagorce17hots,hordijk2018vertical}.
For event cameras, normal flow has been used to recover the linear velocity~\cite{lu2024event,ren2024motion}.

To improve robustness and scale recovery, it is good practice to equip an event camera with other kinds of sensors and exploit sensor fusion. It is common to consider an additional IMU, and many frameworks for event-based visual-inertial odometry have been developed~\cite{kim2014simultaneous,reinbacher2017,zhu2017event,vidal2018ultimate,mueggler2018continuous,xu2023tight,guo2024cmax,pellerito2024deep}. A line-based event-inertial odometry framework was proposed in~\cite{legentil2020idol}, which integrates both event data and inertial measurements for enhanced performance. A continuous-time framework for event-inertial fusion is presented in~\cite{mueggler2018continuous}. CMax and rotation-only bundle adjustment have been used to develop a SLAM system~\cite{guo2024cmax}. 
 
There are several learning-based methods for egomotion regression and event-based visual odometry. Early attempts focused on unsupervised learning~\cite{zhu2019unsupervised,ye2020unsupervised,paredes-valles2021back} and spiking neural network~\cite{gehrig2020event}.
Recently, several methods have been developed to enhance the applicability of learning-based methods, such as generalizing them to out-of-distribution scenarios~\cite{klenk2023deep} and fusing event data into frames end-to-end~\cite{pellerito2024deep}.

For frame-based cameras, relative pose can be recovered by decoupling of rotation from translation \cite{kneip2013direct,chng2020monocular,muhle2022probabilistic,guan2024six,xu2024accurate}. The rotation is determined through eigenvalue minimization of a matrix that is constructed based on the observations~\cite{kneip2013direct}. The rotation-only bundle adjustment method~\cite{lee2021rotation} extends two-view relative pose estimation to pose graph optimization. In this paper, we apply these  rotation-translation decoupling and eigenvalue minimization techniques to event cameras.

\section{Egomotion Estimation Based on Line Incidence}
\label{sec:method}

In this section, we propose a method for recovering both rotational and translational parameters. In addition, we discuss the optimality conditions and how to handle pure rotation cases. 

Let there be a set of $M$ static lines $\{\L_i\}_{i=1}^M$ in 3D space. The Pl{\"u}cker coordinates of a line are denoted as $\L = [\d^\intercal, \m^\intercal]^\intercal$, where $\d \in \mathbb{R}^3$ represents the direction and $\m \in \mathbb{R}^3$ the moment.
A moving event camera observes these lines and generates $N_i$ events $\mathcal{E}_i = \{e_{ij}\}_{j=1}^{N_i}$ for the $i$-th line.
Each event $e_{ij} = (\x_{ij}, \tau_{ij}, p_{ij})$ consists of its normalized image coordinate $\x_{ij}$, timestamp $\tau_{ij}$, and polarity $p_{ij} \in \{+1, -1\}$. 
In this paper, we do not use the polarity information.

Due to the high-rate imaging nature of event cameras, a short time interval can be chosen such that the camera motion is well-approximated by constant angular and linear velocities.
The event camera's angular velocity is $\omg = [\omega_x, \omega_y, \omega_z]$, and its linear velocity is $\v = [v_x, v_y, v_z]$.

\subsection{Preliminaries of Incidence Relation}
\begin{figure}[tbp]
	\centering
	\includegraphics[width=0.95\linewidth]{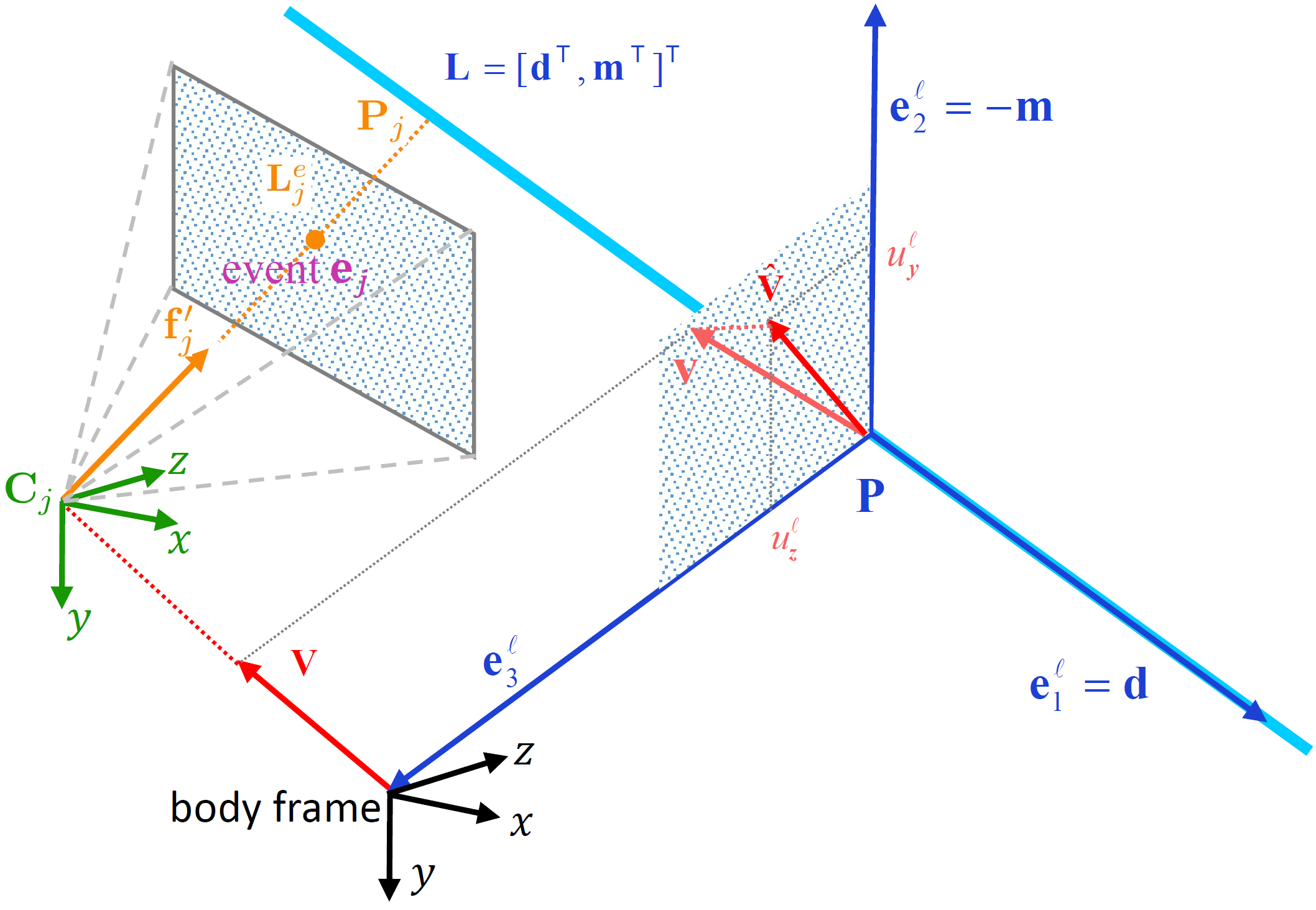}
	\caption{Incidence relation between the observed line $\L$ and line $\L^e_j$ of the $j$-th event. The line $\L^e_j$ is consistent with the bearing vector $\f'_j$. The vector $\hat{\mathbf{v}}$ represents the projection of the translation $\mathbf{v}$ onto the plane spanned by the vectors $\mathbf{e}_2^\ell$ and $\mathbf{e}_3^\ell$, which is filled by dot patterns. Due to the aperture problem, only $u^\ell_y$ and $u^\ell_z$ components are observable.}
	\label{fig:frame}
\end{figure}

Under the constant velocity assumption, the events of a line during a time interval form an eventail manifold~\cite{gao2023fivepoint,gao2024npoint}.
We briefly outline the geometry of the eventail manifold. 
The detailed derivation can be found in the original papers.
The geometry is essentially defined by an incidence relation. 
For simplicity, we first consider the case of one line $\L$, and thus drop the index $i$ from the observed lines.
Let line $\L^e_j$ be consistent with the bearing vector of an individual event $e_j$ triggered at time $\tau_j$.
As shown in \cref{fig:frame}, the line $\L^e_j$ (\textcolor[RGB]{249, 137, 4}{orange line}) should intersect the observed line $\L$ (\textcolor[RGB]{0,207,255}{light blue line}).

Let the time interval of the manifold be $[\tau_s - \Delta \tau, \tau_s + \Delta 
\tau]$, centered at time $\tau_s$. The offset of timestamp $\tau_j$ relative to the interval midpoint is $t_j = \tau_j - 
\tau_s$. 
In brief, $\tau$ and $t$ correspond to absolute and relative timestamps, respectively. Hereafter, we will mainly use the relative timestamps.
Let the camera frame at time $\tau_s$ be the \emph{body frame}.
Under the constant velocity assumption, the orientation $\R_j$ and origin $\C_j$ of the camera frame at time $t_j$ are determined by
\begin{align}
	\R_j &= \R(\omg; t_j) = \exp([t_j \omg]_\times),
	\label{eq:orientation} \\
	\C_j &= t_j \v, \label{eq:origin}
\end{align}
where $\exp(\cdot)$ denotes the exponential map, and $[\cdot]_\times$ is the skew-symmetric matrix of a $3$-dimensional vector.
The orientation is computed by the Rodrigues' rotation formula~\cite{ma2004invitation}. 
The motion described is a \emph{screw motion} with a fixed instantaneous screw axis~\cite{lynch2017modern}.

Let $\f_j$ be the bearing vector for event $e_j$ in the camera reference at time $t_j$. $\f_j$ is obtained by homogenizing the image coordinate $\x_j$ and then normalizing it.
When expressed in the body frame, the bearing vector becomes 
\begin{align}
\f'_j = \R_j \f_j.
\label{eq:bearing-tran}
\end{align}

As shown in \cref{fig:frame}, a \emph{line-dependent frame} $\R^\ell = [\e_1^\ell, \e_2^\ell, \e_3^\ell] \in \text{SO}(3)$ can be uniquely constructed for each line $\L$. 
We select a point $\mathbf{P}$ closest to the origin, such that it is perpendicular to $\d$. Since there is a scale-ambiguity, we may fix the distance from $\mathbf{P}$ to the origin to be unity.
The basis of the frame is $\e_1^\ell = \d$, $\e_3^\ell = -\mathbf{P}$, and $\e_2^\ell = \e_3^\ell \times \e_1^\ell$.
The line parameter $\tht^\ell$ can be extracted using the matrix logarithm $\tht^\ell = \left( \log(\R^\ell) \right)^{\vee} = [\theta^\ell_x, \theta^\ell_y, \theta^\ell_z]$. $(\cdot)^{\vee}$ is the vee operator that maps a skew-symmetric matrix to its corresponding vector representation.

Finally, the linear velocity $\v$ of the camera can be expressed in frame $\R^\ell$ by
\begin{align}
	\v = \R^\ell \u^\ell,
	\label{eq:velo-trans}
\end{align}
where $\u^\ell = [u^\ell_x, u^\ell_y, u^\ell_z]$ is the linear velocity in the line-dependent frame. Due to the aperture problem, only $u^\ell_y$ and $u^\ell_z$ components are observable. In other words, $u^\ell_x$ vanishes in \cref{eq:velo-trans}.
By utilizing the frame $\R^\ell$, the incidence relation between $\L^e_j$ and line $\L$ yields the following equation
\begin{align}
	t_j {\f'_j}^\intercal (u^\ell_z \e^\ell_2 - u^\ell_y \e^\ell_3) + {\f'_j}^\intercal \e^\ell_2 = 0.
	\label{eq:incidence}
\end{align}

\subsection{Rotation Estimation from A Single Line}
The incidence relation of \cref{eq:incidence} comprises eight unknowns: three associated with the angular velocity $\omg = [\omega_x, \omega_y, \omega_z]$, two with partial linear velocity $\u^\ell = [0, u^\ell_y, u^\ell_z]$, and three with the line parameters $\tht^\ell$. 
If rotational parameters are known, the bearing vectors $\{\f'_j\}_j$ can be pre-rotated using known rotations, leaving only five unknowns in the resulting problem. However, if the rotational parameters are unknown, they can be treated as unknowns, resulting in a more complex problem.
 
By arranging the incidence equations of $N$ points ($N \ge 8$) in \cref{eq:incidence}, we can express them in matrix form
\begin{align} 
	\underbrace{
	\begin{bmatrix}
		t_1 {\f'_1}^\intercal & {\f'_1}^\intercal \\
		\vdots & \vdots \\
		t_N \f'^\intercal_N & \f'^\intercal_N
	\end{bmatrix}
	}_{\dot{=} \A(\omg) \in \mathbb{R}^{N\times 6}}
	\underbrace{
	\begin{bmatrix}
		u^\ell_z \e^\ell_2 - u^\ell_y \e^\ell_3 \\
		\e^\ell_2
	\end{bmatrix}
	}_{\dot{=}\x \in \mathbb{R}^{6\times 1}}
	= \mathbf{0}.
	\label{eq:incidence-matrix}
\end{align}
Note that $\f'_j$ are the unrotated versions of the $\f_j$ in \cref{eq:bearing-tran}.
Thus, the matrix $\A(\omg)$ depends on the unknown angular velocity. This formulation decouples rotation estimation from translation and line structure estimation. One can easily verify that the minimal configuration is $N = 8$. By taking the sub-determinants of matrix $\A$ and setting them to $0$, we find that at least $8$ rows are required to constrain the 3~DoF of the rotation, given that there are $6$ columns.

Since $\A(\omg)$ has a non-empty null space, we seek the angular velocity $\omg$ that minimizes the singular value of $\A$. Let $\M(\omg)$ be defined as
\begin{align}
	\M(\omg) \dot{=} \A^\intercal(\omg) \A(\omg).
	\label{eq:M-matrix}
\end{align}
$\M$ is a real symmetric $6\times 6$ matrix, and it is positive semidefinite (PSD).
Then the objective can be equivalently written as minimizing the smallest eigenvalue $\lambda_{\text{min}}$ of $\M$:
\begin{align}
	\omg^* = \argmin_{\omg} \ \lambda_{\text{min}}(\M(\omg)).
	\tag{\textcolor{red}{IncMin}}
	\label{eq:opt1}
\end{align}

When $N=8$, this problem is exactly-determined. When $N > 8$, the problem becomes over-determined. 
In both cases, the first order optimality conditions or the Karush-Kuhn-Tucker (KKT) conditions have exactly the same expression. Although the equation systems for KKT conditions are intractable, they unify the two cases into a minimal problem, regardless of how many events are used.

\subsection{Rotation Estimation from A Batch of Lines}

To make the angular velocity recovery more accurate and stable, we can also use a batch of $M$ lines ($M \ge 2$).
Define 
\begin{align}
	\M_i(\omg) \dot{=} \A_i^\intercal(\omg) \A_i(\omg),
	\label{eq:M-matrix-i}
\end{align}
where $\A_i$ is associated with the $i$-th line. Then the angular velocity recovery can be obtained by optimizing
\begin{align}
	\omg^* = \argmin_{\omg} \ \sum_{i=1}^M \lambda_{\text{min}}(\M_i(\omg)).
	\tag{\textcolor{red}{IncBat}}
	\label{eq:opt1-i}
\end{align}

\subsection{Translation Estimation}
Once the angular velocity has been recovered, whether it is from a single line or a batch of lines, we can apply a linear solver from either \cite{gao2024npoint} or \cref{sec:npt-cop} of this paper to recover the linear velocity and line structure.
For the linear solver in \cite{gao2024npoint}, we prove that it cannot handle pure rotation cases. We propose a method to identify and effectively handle pure rotation scenarios. More details are provided in the supplementary material.

\section{Egomotion Estimation Based on Coplanarity of Normal Vectors}

In this section, we propose a coplanarity relation among event-induced plane normals. This coplanarity geometry, which is based on normal flow, enables a novel formulation for egomotion estimation.

\begin{figure}[tbp]
	\centering
	\includegraphics[width=0.75\linewidth]{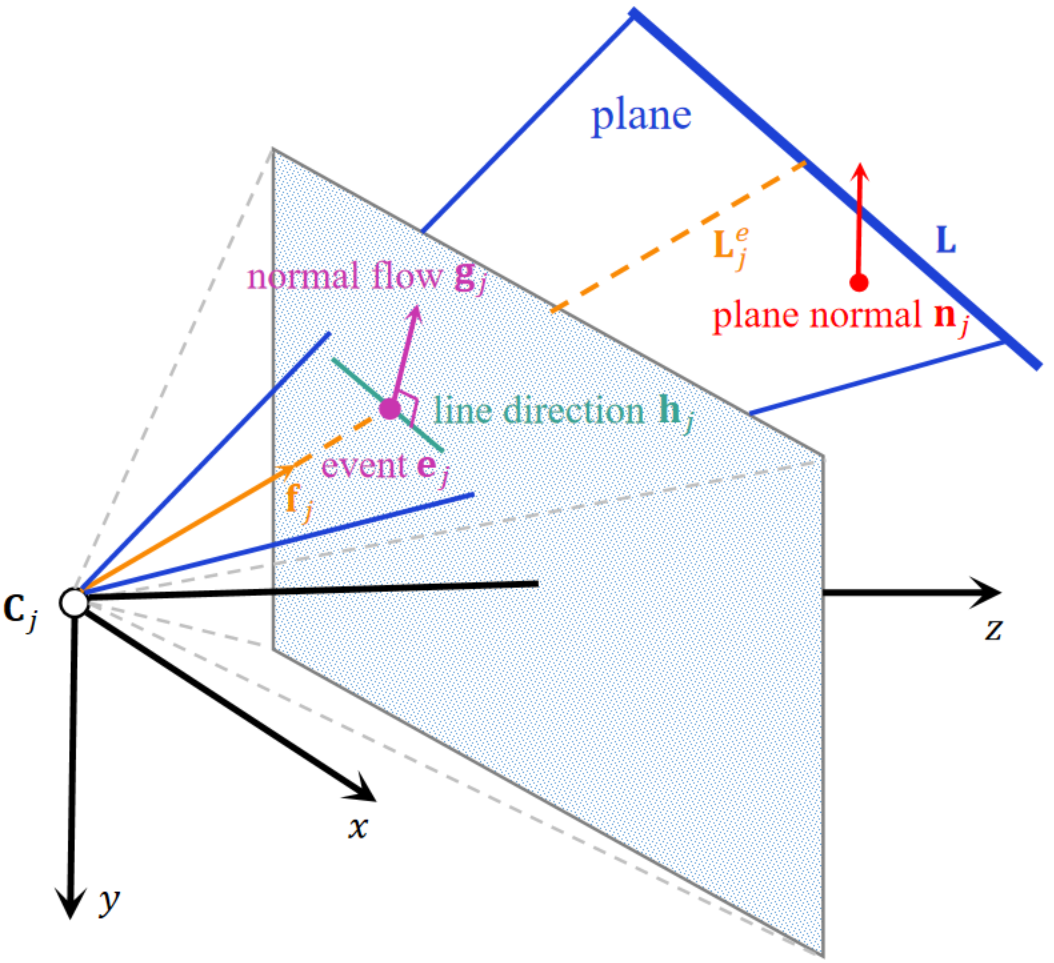}
	\caption{Coplanarity relation between plane normal vectors. Plane normal $\n_j$ can be computed from the event $e_j$ and its normal flow $\mathbf{g}_j$. The line direction vector $\mathbf{h}_j$ in the image plane is perpendicular with $\mathbf{g}_j$ within the image plane. The line $\L$ is orthogonal to the plane normal set $\{\n'_j\}_{j=1}^N$.}
	\label{fig:flow}
\end{figure}

\subsection{Coplanarity Relation}
Let us begin with a single line for simplicity of notation.
Consider $\L = [\d^\intercal, \m^\intercal]^\intercal$ as a static line in 3D space. 
A moving event camera observes this line and generates $N$ events $\mathcal{E} = \{e_{j}\}_{j=1}^{N}$.

For each event $e_j$, let $\n_{j}$ be the normal vector of the plane crossing $\L$ and $\L^e_j$ as illustrated in \cref{fig:flow}.
Let $\n'_j$ be the normal vector $\n_j$ expressed in the body frame.
Since the 3D line $\L$ lies in all the planes induced by the event set $\mathcal{E}$, the line direction $\d$ is perpendicular to all normals $\{\n_i\}_{i=1}^N$. This coplanarity relation gives rise to
\begin{align}
	\underbrace{[\n'_1, \cdots, \n'_N]^\intercal}_{\dot{=} \B \in \mathbb{R}^{N\times 3}} \d = \mathbf{0},
	\label{eq:coplanarity}
\end{align}
It can be observed that matrix $\B$ is rank-deficient.
This property forms the basis of our developed method. 
The coplanarity relation exhibits a duality with the incidence relation. For example, normal vector $\n_j$ is parallel to the moment $\m_j$ of line $\L^e_j$, and is also parallel to axis $\e_2^\ell$ of the line-dependent frame.

As shown in \cref{fig:flow}, plane normals can be derived from events and their normal flow. The procedure is summarized as follows. (1)~We compute the normal flow $\{[g^x_{j}, g^y_{j}]\}_{j=1}^N$ for the events using methods based on time surface maps~\cite{benosman2013event,lagorce17hots}.
(2)~Since normal flow $\mathbf{g}_j = [g^x_j, g^y_j]$ of event $e_j$ is parallel to the brightness gradient in the image plane, the line direction $\mathbf{h}_j = [h^x_j, h^y_j]$ in the image plane must be perpendicular to the normal flow, i.e., $[h^x_j, h^y_j] \propto [-g^y_j, g^x_j]$. 
(3)~Given event $e_j$ and its associated line direction $\mathbf{h}_j$ in the image plane, the plane normal $\n_j$ is determined by $\n_j \propto \f_j \times [h^x_j, h^y_j, 0]^\intercal \propto \f_j \times [-g^y_j, g^x_j, 0]^\intercal$. 
(4)~The plane normal $\n'_j$ expressed in the body frame is 
\begin{align}
	\n'_j = \R_j \n_j.
	\label{eq:nml-tran}
\end{align}

Based on the constructed plane normals, we will discuss how to recover angular and linear velocities. For simplicity in the following derivations, we introduce the matrix 
\begin{align}
	\N \dot{=} \B^\intercal \B = \sum_{j=1}^N \n'_j {\n'_j}^\intercal,
	\label{eq:matrix-N}
\end{align}
where $\N$ is a real symmetric $3\times 3$ matrix, and it is positive semidefinite.

\subsection{A Linear Solver for Translation Estimation}
\label{sec:npt-cop}
Based on the coplanarity relation, we develop an $N$-point linear solver to recover the translation and line structure. 
The linear solver is applied once the rotational parameters are available. The rotational parameters can be obtained by the angular velocity recovery method described in this paper, or provided directly by a sensor.

According to \cref{eq:coplanarity}, the line direction vector $\d$ is the eigenvector corresponding to the smallest eigenvalue of $\N$ (or the vector corresponding to the smallest singular value of $\B$). 
As shown in \cref{fig:frame}, line $\L^e_j$ coincides with the bearing vector $\f'_j$, and the camera center $t_j \v$ lies on this line. So the moment of line $\L^e_j$ is determined by $\m_j = (t_j \v) \times \f'_j$. The bearing vector $\f'_j$ can be pre-rotated using the known rotations.  Since line $\L^e_j = [{\f'_j}^\intercal, \m_j^\intercal]^\intercal$ intersects line $\L = [\d^\intercal, \m^\intercal]^\intercal$, we have
\begin{align}
	& \d^\intercal ((t_j \v) \times \f'_j) + {\f'_j}^\intercal \m = 0 \\
	\Rightarrow &
	\begin{bmatrix}
		t_j(\f'_j \times \d)^\intercal & {\f'_j}^\intercal
	\end{bmatrix}
	\begin{bmatrix}
		\v \\ \m
	\end{bmatrix}
	=0 \label{eq:linear-velo} \\
	\Rightarrow &
	\begin{bmatrix}
		t_j(\f'_j \times \d)^\intercal & {\f'_j}^\intercal
	\end{bmatrix}
	\begin{bmatrix}
		\v + \alpha \d \\ \m
	\end{bmatrix}
	=0, \ \forall \alpha \in \mathbb{R}.
	\label{eq:linear-velo-factor}
\end{align}
In above derivation, the last equation holds because $t_j(\f'_j \times \d)^\intercal (\alpha \d) = 0$. Thus, the unknown factor $\alpha$ cannot be determined given a single line. This is known as the aperture problem, i.e., the component of velocity $\v$ parallel to line direction $\d$ is unobservable when using a single line.

\cref{eq:linear-velo-factor} results in a linear $N$-point solver. By applying singular value decomposition (SVD) to the stacked matrix of $N$ points, the linear velocity $\v$, which includes an unknown component $\alpha \d$, and the line moment $\m$ can be obtained. Due to the scale-ambiguity in $\v$ and $\m$, at least $5$ events are needed to recover the unknowns. 
To remove the unknown component $\alpha \d$ from $\v$, non-parallel lines from multiple events should be used. 
Finally, the full linear velocity $\v$ can be obtained by the \emph{velocity averaging} method developed in \cite{gao2024npoint}. It is known that the linear velocity $\v$ can only be recovered up to an unknown scale, but its sign can be correctly determined, as discussed in \cite{gao2024npoint}.

\subsection{Rotation Estimation}

Given plane normal set $\{\n'_j\}_j$ by \cref{eq:nml-tran}, we have 
\begin{align}
	\N(\omg) = \sum_{j=1}^N (\R(\omg; t_j) \n_j) (\R(\omg; t_j) \n_j)^\intercal.
	\label{eq:N-matrix-nf}
\end{align}
It can be seen that each element of matrix $\N$ depends only on the angular velocity $\omg$. 

Considering $\B$ is rank deficient in \cref{eq:coplanarity}, we seek the angular velocity $\omg$ that minimizes the smallest singular value of $\B$.
This problem can be formulated as minimizing the smallest eigenvalue $\lambda_{\text{min}}$ of $\N(\omg)$ by
\begin{align}
	\omg^* = \argmin_{\omg} \  \lambda_{\text{min}}(\N(\omg))
	\tag{\textcolor{red}{CopMin}}.
	\label{eq:opt_lmd_nf}
\end{align}

If observations from a batch of $M$
lines are given, the optimization problem becomes
\begin{align}
	\omg^* = \argmin_{\omg} \ \sum_{i=1}^M \lambda_{\text{min}}(\N_i(\omg))
	\tag{\textcolor{red}{CopBat}},
	\label{eq:opt-lmd_nf_i}
\end{align}
where $\N_i$ is the matrix corresponding to the $i$-th line.

\section{Optimization}
In this section, we discuss how to solve these problems effectively and efficiently. 

\subsection{Rotation Parametrizations}
According to Rodrigues' rotation formula, the rotation described in \cref{eq:orientation} can be explicitly written as
\begin{align}
	\R_j &= \exp([t_j \omg]_\times) \nonumber \\
	&= \I + \frac{\sin(\theta_j)}{\theta_j} [t_j \omg]_\times + \frac{1-\cos(\theta_j)}{\theta_j^2} [t_j \omg]^2_\times,
	\label{eq:rot-exact}
\end{align}
where $\theta_j = \|t_j \omg\|$ is the rotation angle, and $t_j \omg/\theta_j$ is the rotation axis.

To simplify $\R_j$, note that $\sin(\theta) \approx \theta$ and $\cos(\theta) \approx 1 - \frac{\theta^2}{2}$ when angle $\theta$ is small.
By preserving only the linear term in $\omg$, we have a first-order approximation of the rotation as
\begin{align}
	\R_j &\approx \I + [t_j \omg]_\times \\
	&= 
	\begin{bmatrix}
		1 & -t_j \omega_z & t_j \omega_y \\
		t_j \omega_z & 1 & -t_j \omega_x \\
		-t_j \omega_y & t_j \omega_x & 1
	\end{bmatrix},
	\label{eq:rot-approx}
\end{align}
where $\I_{3\times 3}$ is an identity matrix. 
When rotation angle is small, the first-order approximation of rotations has been applied in geometric vision tasks to simplify related equations~\cite{stewenius2007efficient,ventura2014approximated,ventura2015efficient,gallego2017accurate}. 
This first-order approximation is reasonable in our problems since the event set spans a very short time interval, which gives a very small angle when multiplied by an angular velocity. 

\subsection{Adam Optimizer}

The Adam (adaptive moment estimation) optimizer~\cite{kingma2015adam} is widely used in deep learning. It is based on adaptive estimation of first-order and second-order moments. 
We found it has better performance than the Gauss-Newton and Levenberg-Marquardt optimizers for our problems.
Thus we use it to solve the optimization problems. 

{\bf Exact Rotation.}
In order to apply the Adam optimizer, the gradient of the objective function must be computed. 
Here we summarize the calculation of the objectives.

For the incidence formulation, the objective requires computing the smallest eigenvalue of $6\times 6$ matrices, for which no closed-form solution exists. Thus, the gradient also has no closed-form expression. Instead, we approximate the gradient numerically using the first-order finite difference method (FDM).
For the coplanarity formulation, the objective requires computing the smallest eigenvalue of 
$3\times 3$ matrices. In this case, closed-form formulas for the gradient are available, and we use these formulas to compute the gradients directly.

{\bf First-Order Approximated Rotation.}
The first-order approximated rotation allows for the application of the \emph{data compression} technique~\cite{helmke2007essential,kneip2013direct} to accelerate the computation of objectives and gradients in optimization. This is because certain intermediate results can be pre-computed and reused throughout the optimization process. 
When using the first-order approximated rotation, the Adam optimizers are similar to their aforementioned counterparts with exact rotation. The only difference lies in the computation of the objectives and their gradients. 

Although the approximated rotation parameterization introduces some approximation errors, it facilitates an efficient data compression technique. This makes each iteration of the optimizer having linear computational complexity with respect to the number of lines. By contrast, each iteration with exact rotation has linear computational complexity with respect to the total number of events.

{\bf Cascade of Two Rotation Parametrizations.}
To leverage the advantages of both rotation parameterizations, we propose a cascade method. First, by applying the approximated rotation parameterization, we efficiently solve the problem using the data compression technique. This solution serves as the initialization for solving the problem with the exact rotation parameterization.
In the subsequent experiments, we test both the exact and first-order approximated rotations, as well as their cascade.

\section{Experiments}
\label{sec:exp}

\begin{table*}[t]
	\centering
	\begin{tabular}{|l|l|r|r|r|r|r|r|}
		\hline
		formulation & rotation  & median $\varepsilon_{\text{ang}}$ & median $\varepsilon_{\text{lin}}$ ($^\circ$) & SR1(\%) & SR2(\%) & median objective & runtime \\ \hline
		\multirow{3}{*}{\scenario{IncBat}}
		& \scenario{+approx} & $1.4\times 10^{-2}$ & $1.6\times 10^{-1}$ & $32.4$ & $94.7$ & $1.2\times 10^{-8}$ & $16.8$ ms\\ 
		& \scenario{+exact} & $3.8\times 10^{-4}$ & $4.2\times 10^{-3}$ & $98.7$ & $98.9$ & $6.7\times 10^{-12}$ & $48.7$ ms\\ 
		& \scenario{+cascad} & $1.6\times 10^{-4}$ & $1.8\times 10^{-3}$ & $98.9$ & $99.2$ & $3.1\times 10^{-12}$ & $48.6$ ms\\ 
		\hline
		\multirow{3}{*}{\scenario{CopBat}} 
		& \scenario{+approx} & $1.9\times 10^{-2}$ & $2.1\times 10^{-1}$ & $21.9$ & $85.8$ & $1.0\times 10^{-7}$ & $16.7$ ms\\ 
		& \scenario{+exact} & $2.8\times 10^{-4}$ & $3.3\times 10^{-3}$ & $94.9$ & $96.7$ & $3.8\times 10^{-11}$ & $32.7$ ms \\ 
		& \scenario{+cascad} & $9.3\times 10^{-5}$ & $1.1\times 10^{-3}$ & $96.6$ & $97.3$ & $2.6\times 10^{-11}$ & $17.1$ ms\\ 
		\hline
	\end{tabular}
	\caption{Runtime and numerical stability for noise-free synthetic data. The configuration is $M=5$ and $N=100$. SR1 and SR2 represent success rate (SR) with thresholds of $0.01$ and $0.05$, respectively.}
	\label{tab:rslt-adam-M2-N10}
\end{table*}


We implemented the Adam optimizer in C++.
For the hyper-parameters of Adam, we use
the default values suggested in the original paper~\cite{kingma2015adam}.
The angular velocity $\omg$ is initialized as a zero vector. 
\scenario{IncBat} and \scenario{CopBat} represent the incidence and coplanarity formulations, respectively.
We use \scenario{+exact}, \scenario{+approx} and \scenario{+cascad} to denote methods using exact and approximated parametrizations, as well as their cascade, respectively. When the approximated parametrization is used, the data compression technique is applied.

To evaluate the accuracy of the recovered angular velocities $\omg_{\text{est}}$, we consider both the direction and magnitude. We use the following metric~\cite{guan2024six}
\begin{align}
	\varepsilon_{\text{ang}}(\omg_{\text{est}}; \omg_{\text{gt}}) = \|\omg_{\text{est}} - \omg_{\text{gt}}\| / (\|\omg_{\text{est}}\| + \|\omg_{\text{gt}}\|).
\end{align}
Note that $\|\omg_{\text{est}}\|$, $\|\omg_{\text{gt}}\|$, and $\|\omg_{\text{est}} - \omg_{\text{gt}}\|$ can be considered as the three sides of a triangle. Thus this metric lies within the interval $[0, 1]$, with smaller values indicating better results.

To evaluate the accuracy of the recovered linear velocities, we compare the angular difference between the ground truth and the estimation. Since the estimated translation is only known up to an unknown scale, the metric is 
\begin{align}
	\varepsilon_{\text{lin}}(\v_{\text{est}}; \v_{\text{gt}}) = \arccos \left(  \v_{\text{est}}^\intercal \v_{\text{gt}}  / (\|\v_{\text{est}}\| \cdot \|\v_{\text{gt}}\|) \right).
\end{align}

We conducted our experiments on a laptop with a 12th Generation Intel(R) Core(TM) i7-12700H CPU and $16$ GB RAM. 

\subsection{Simulation}
To thoroughly evaluate our solver’s performance, we first conduct simulation tests.
The ground truth angular velocities $\omega_x$, $\omega_y$, and $\omega_z$ are sampled from a uniform distribution $[-1/8, 1/8]$ rad/s. The ground truth linear velocities $v_x$, $v_y$, and $v_z$ are sampled from a uniform distribution $[-5, 5]$ m/s. The time interval for event set is set to $0.5$ seconds. To synthesize a line $\L$, we first generate a point $\mathbf{p}$ lying on this line by randomly sampling a point from a 3D cube, whose center is $[0, 0, 1]$ and side length is $5$ meters. 
The direction vector $\d$ of each line is random, but only a subset of the lines is retained. These lines must have an angle with the imaging plane that is below $60^\circ$, ensuring that they are somewhat fronto-parallel.
To synthesize an event, we randomly sample a point from the time interval as its imaging time. The scene point corresponding to an event is randomly sampled within a distance of $2.5$ meters from point $\mathbf{p}$. 
The virtual camera has a focal length of $400$ pixels. To extract normal flow, we first accumulate events into frames and then apply the Hough transform to locate event points near the lines. The normal flow of these events corresponds to the normal vectors of the imaged lines.

\begin{figure*}[tbp]
	\centering
	\begin{subfigure}[b]{0.12\linewidth}
		\centering
		\includegraphics[width=\textwidth]{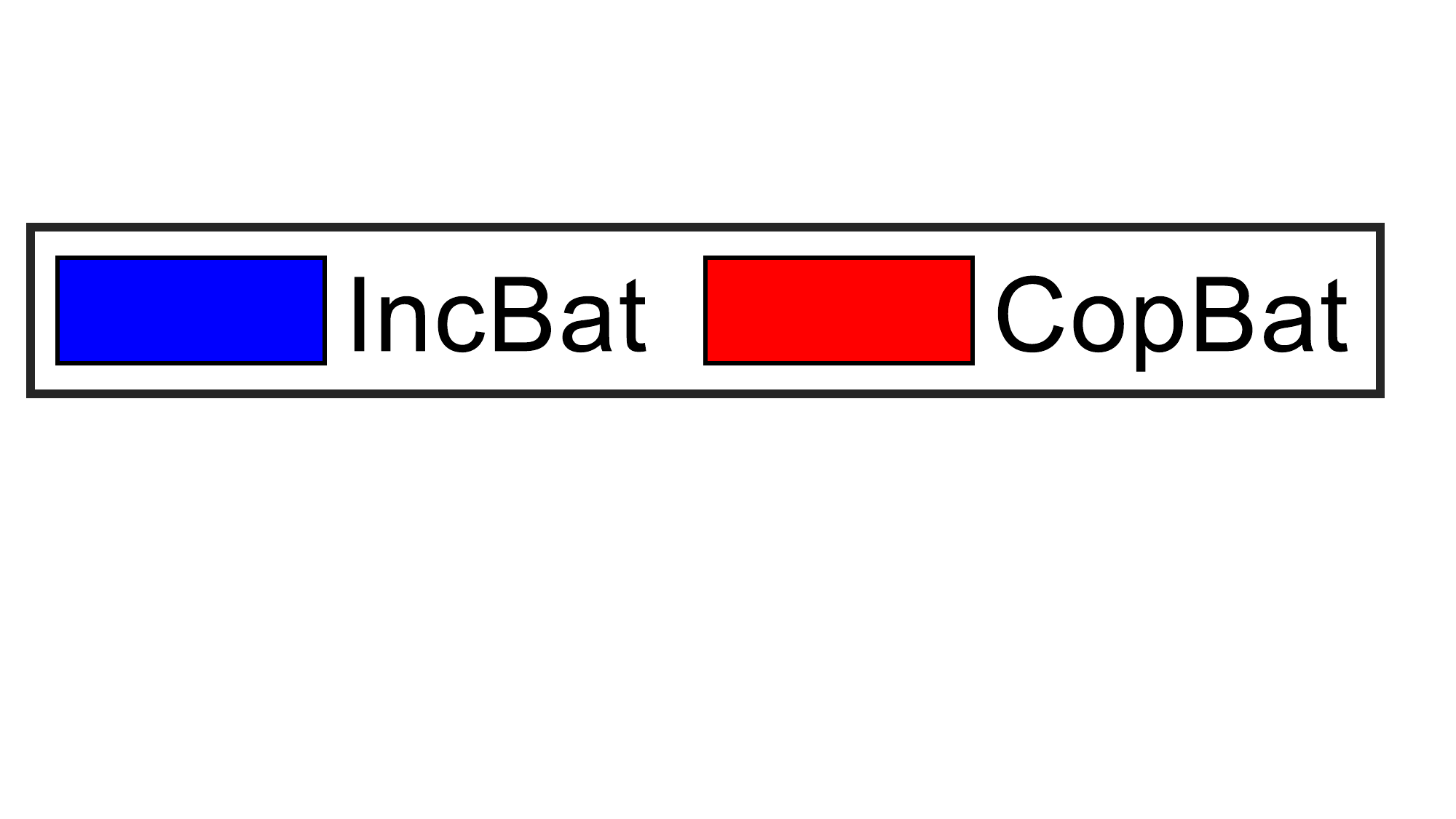}
	\end{subfigure}
	\vspace{-6mm}
	\\ 
	\begin{subfigure}[b]{0.43\linewidth}
		\centering
		\includegraphics[width=\textwidth]{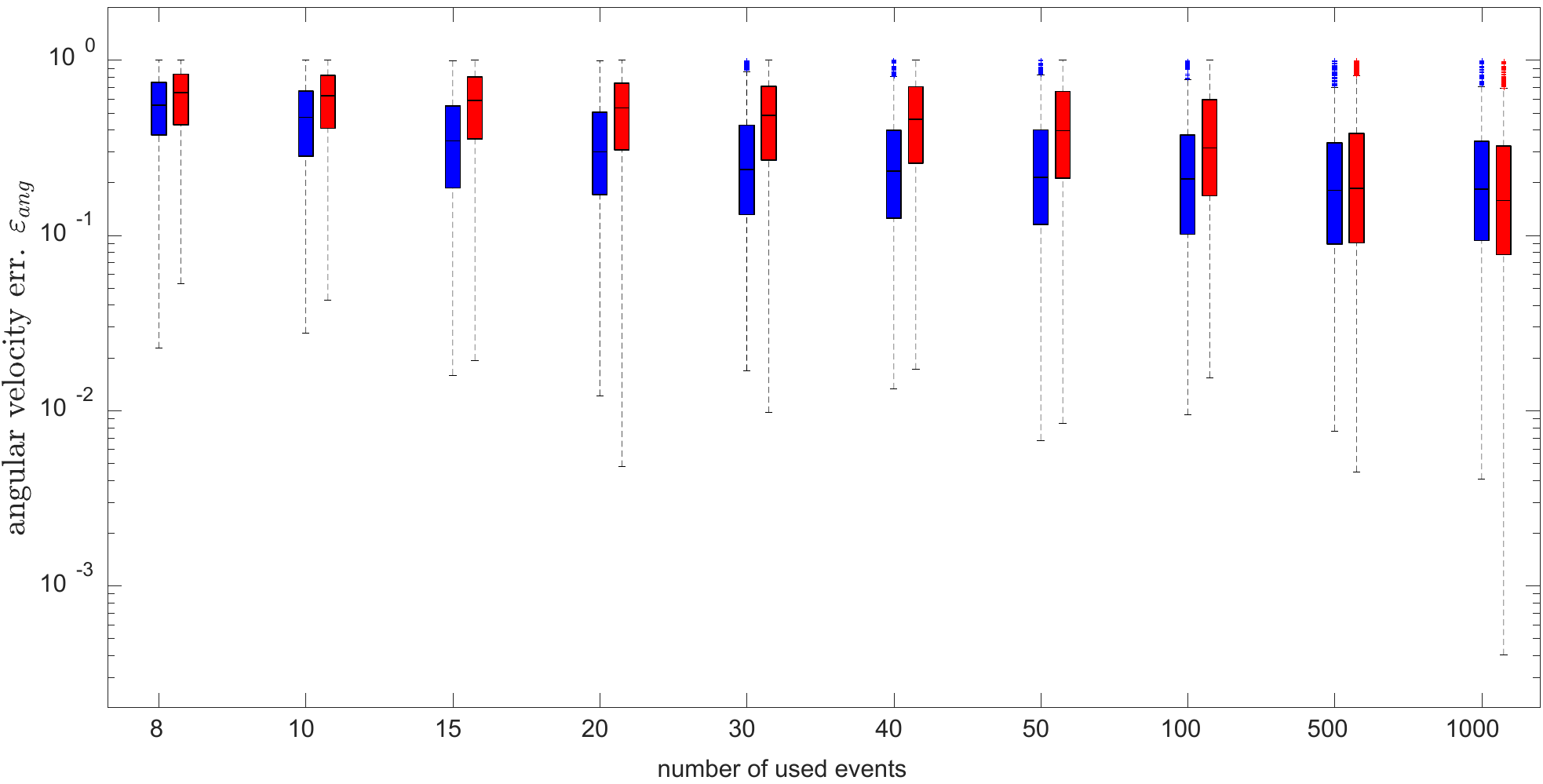}
		\caption{$\varepsilon_{\text{ang}}$ vs. number of events}
		\label{w_err_number_events}
	\end{subfigure}
	\qquad
	\begin{subfigure}[b]{0.43\linewidth}
		\centering
		\includegraphics[width=\textwidth]{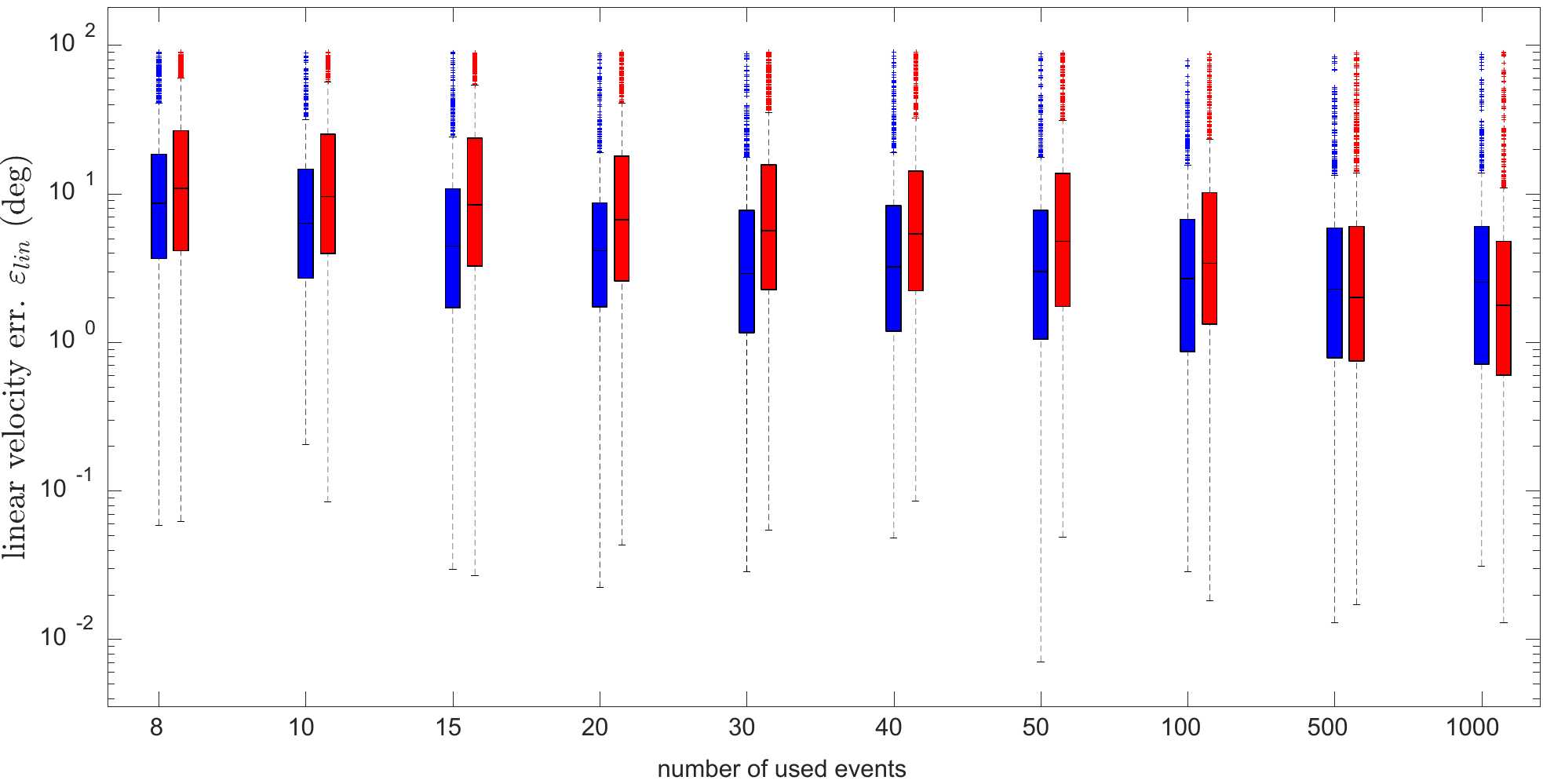}
		\caption{$\varepsilon_{\text{lin}}$ vs. number of events}
		\label{v_err_number_events}
	\end{subfigure}
	\\
	\begin{subfigure}[b]{0.43\linewidth}
		\centering
		\includegraphics[width=\textwidth]{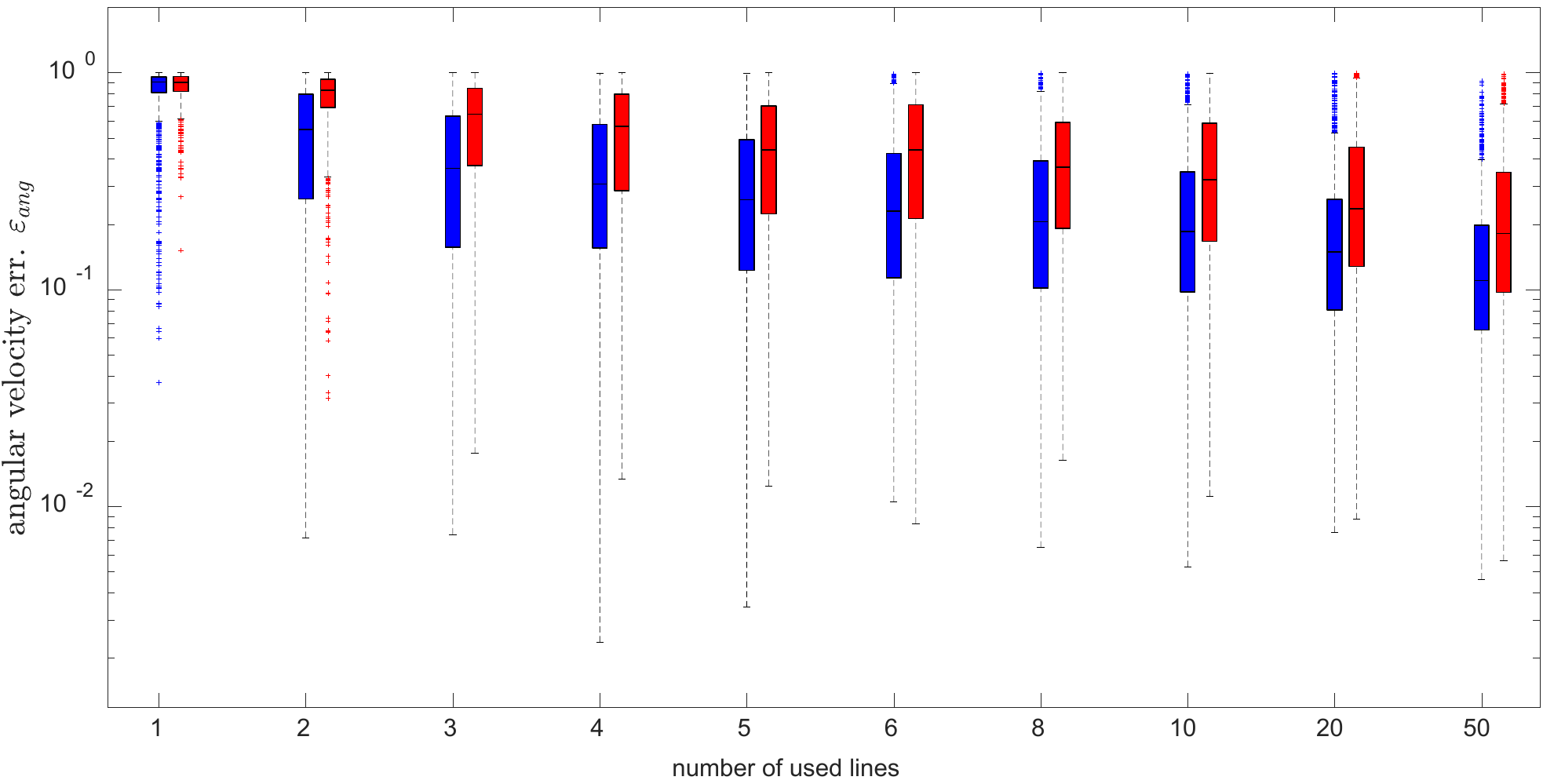}
		\caption{$\varepsilon_{\text{ang}}$ vs. number of lines}
		\label{w_err_number_lines}
	\end{subfigure}
	\qquad
	\begin{subfigure}[b]{0.43\linewidth}
		\centering
		\includegraphics[width=\textwidth]{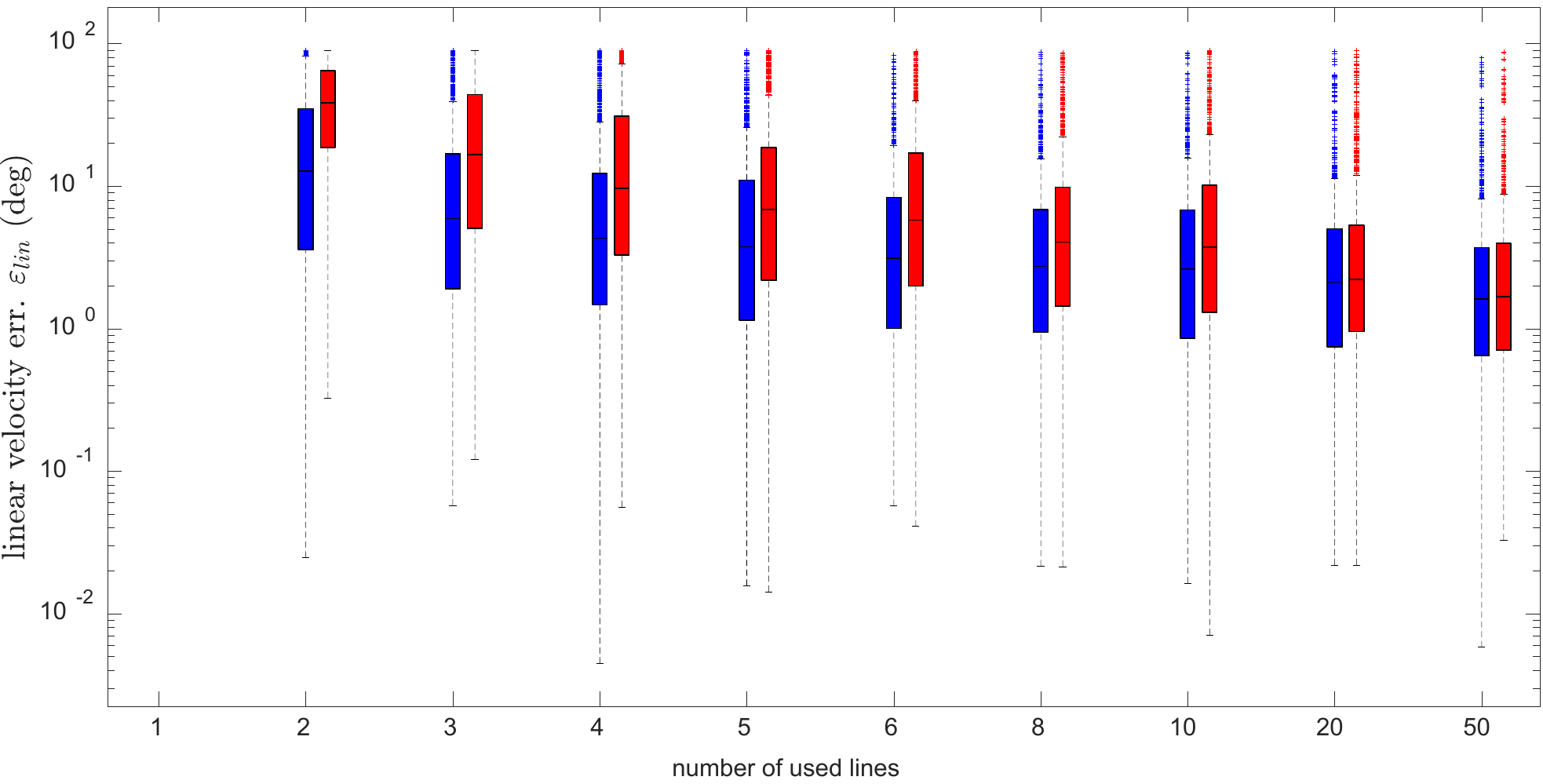}
		\caption{$\varepsilon_{\text{lin}}$ vs. number of lines}
		\label{v_err_number_lines}
	\end{subfigure}
	\\
	\begin{subfigure}[b]{0.43\linewidth}
		\centering
		\includegraphics[width=\textwidth]{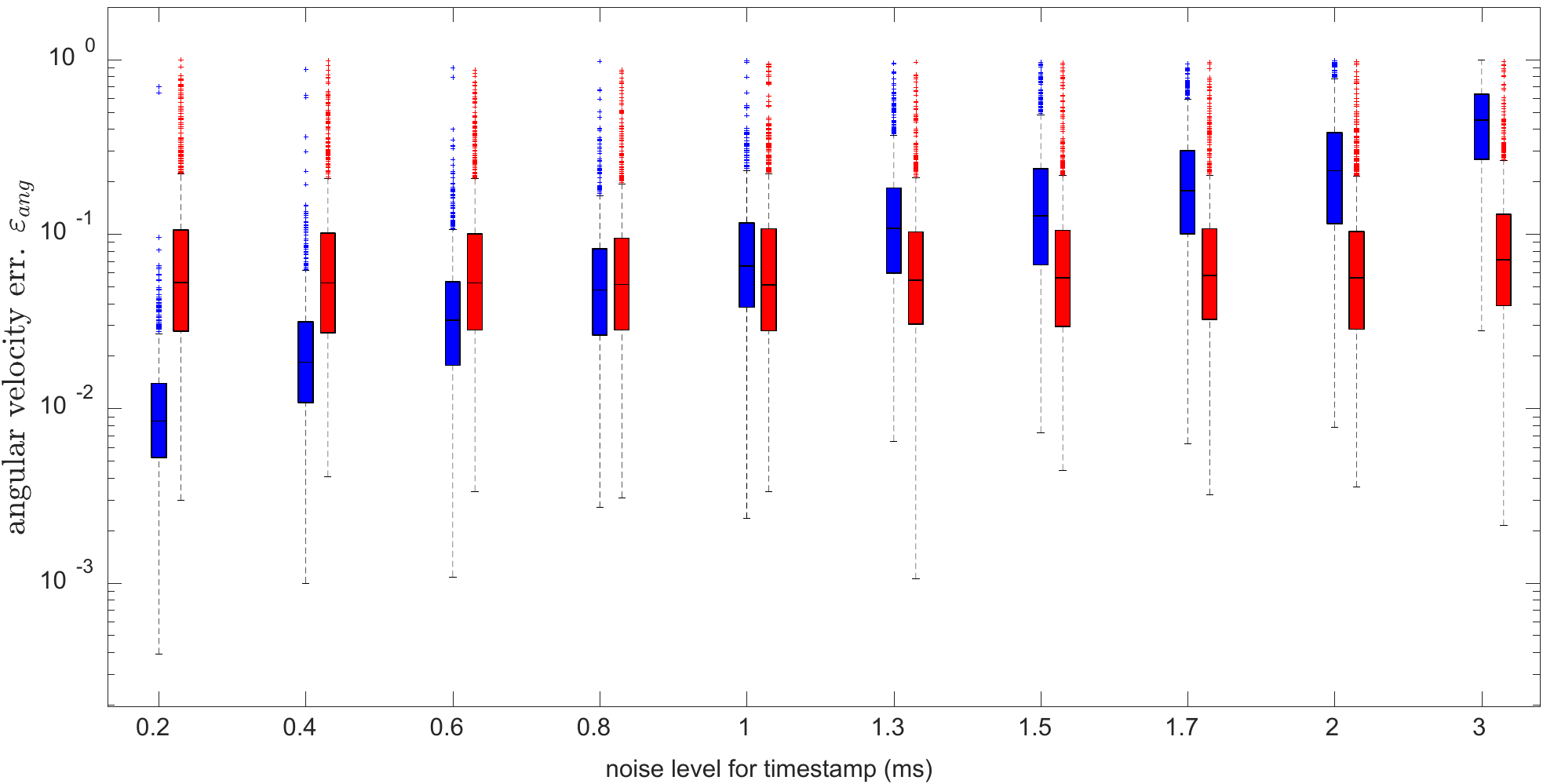}
		\caption{$\varepsilon_{\text{ang}}$ vs. noise level of timestamp}
		\label{w_err_time_noise}
	\end{subfigure}
	\qquad
	\begin{subfigure}[b]{0.43\linewidth}
		\centering
		\includegraphics[width=\textwidth]{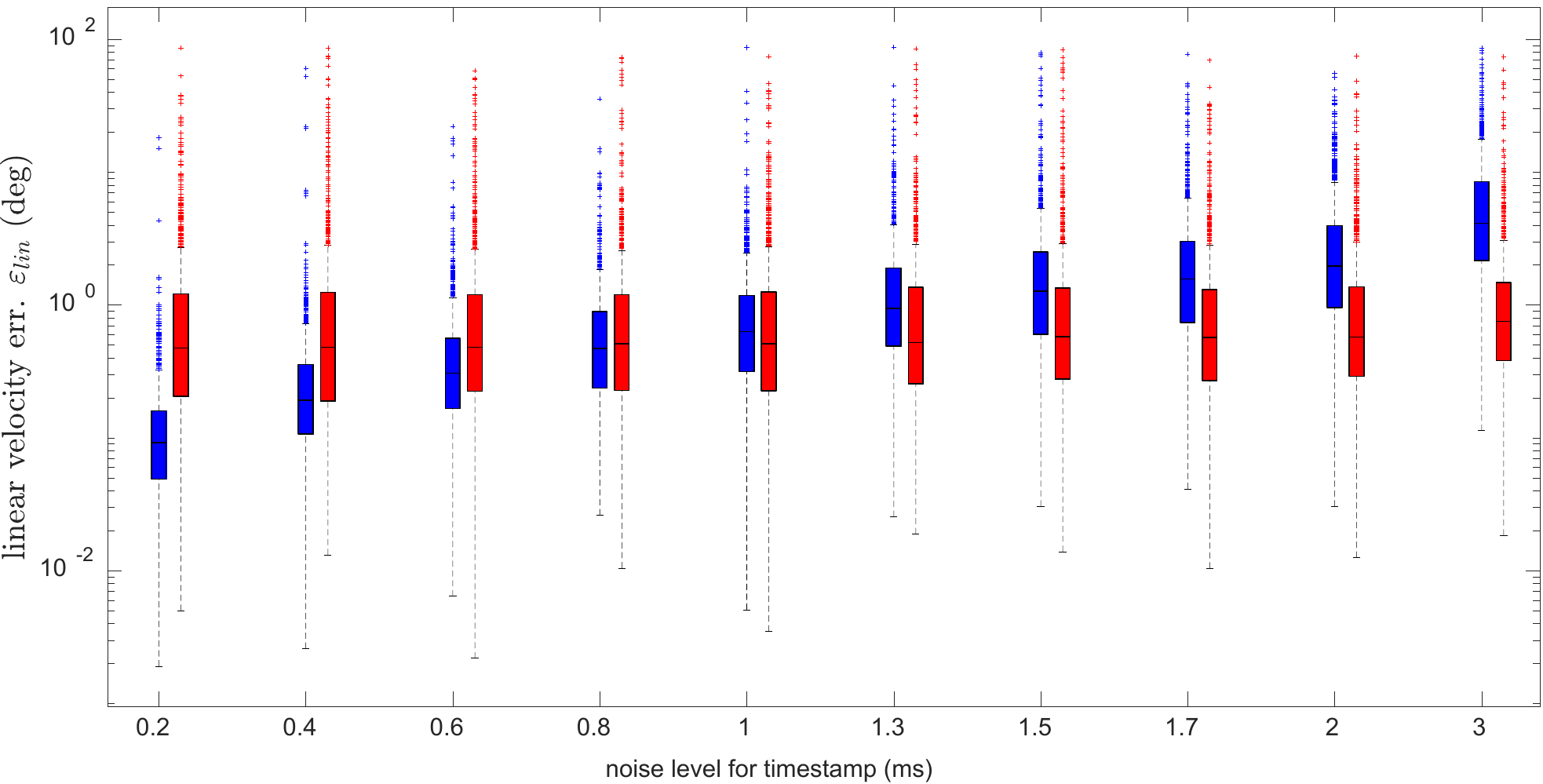}
		\caption{$\varepsilon_{\text{lin}}$ vs. noise level of timestamp}
		\label{v_err_time_noise}
	\end{subfigure}
	\\ 
	\begin{subfigure}[b]{0.43\linewidth}
		\centering
		\includegraphics[width=\textwidth]{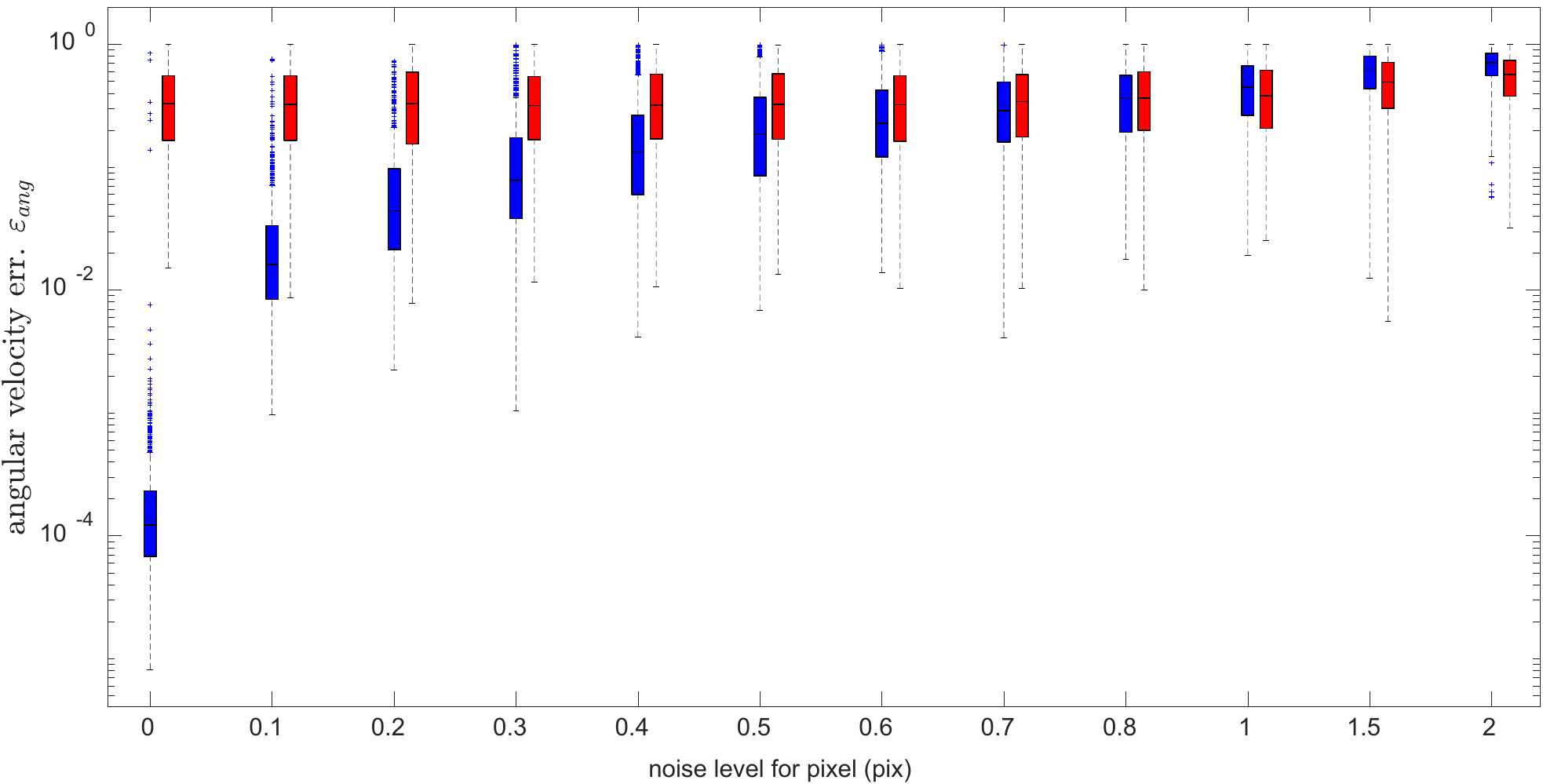}
		\caption{$\varepsilon_{\text{ang}}$ vs. noise level of pixel}
		\label{w_err_pixel_noise}
	\end{subfigure}
	\qquad
	\begin{subfigure}[b]{0.43\linewidth}
		\centering
		\includegraphics[width=\textwidth]{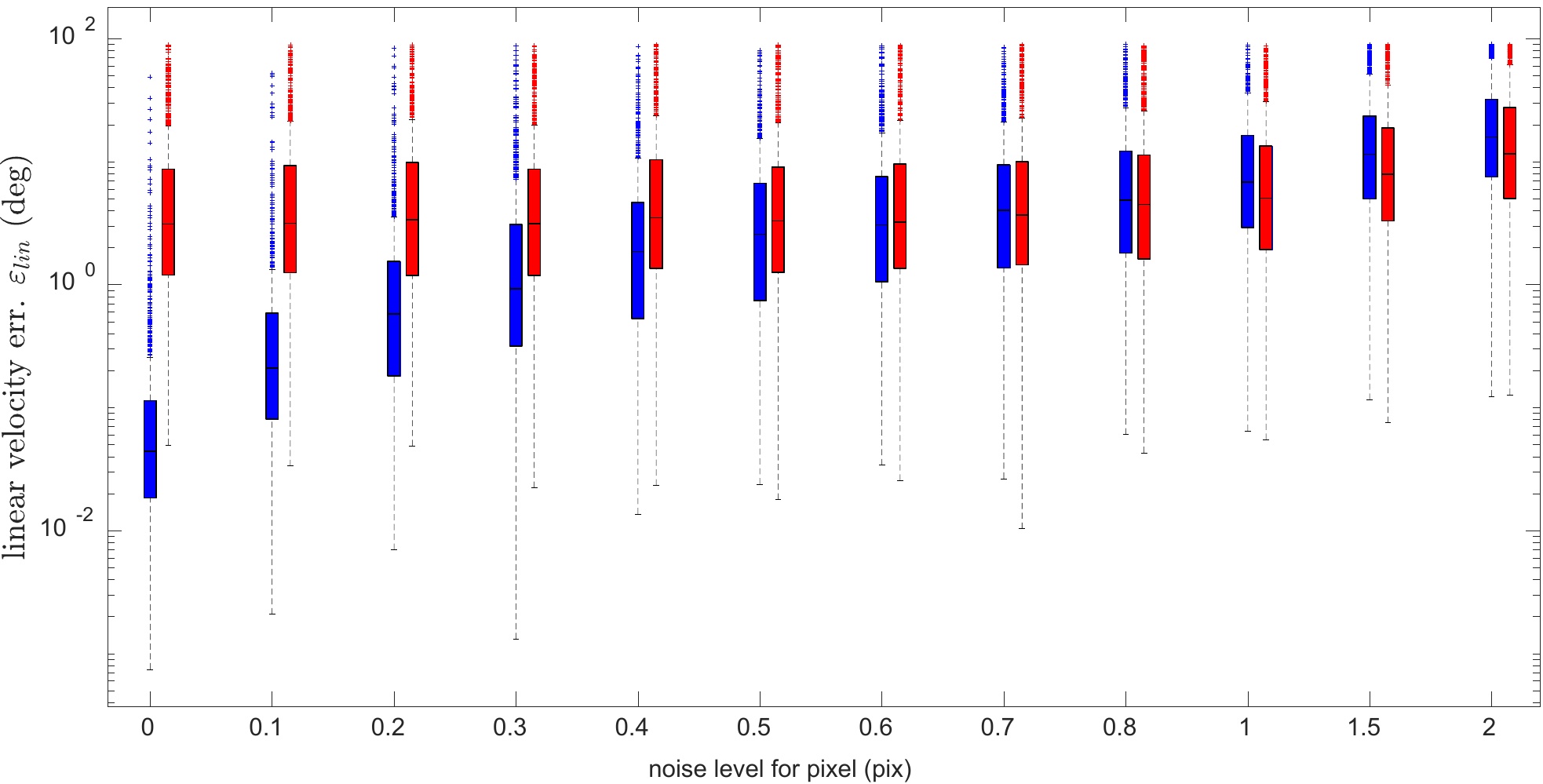}
		\caption{$\varepsilon_{\text{lin}}$ vs. noise level of pixel}
		\label{v_err_pixel_noise}
	\end{subfigure}
	\\ 
	\caption{The results on synthetic data demonstrate the relationship between errors and various factors, such as the number of events, number of lines, and noise levels.}
	\label{fig:synthetic}
\end{figure*}

{\bf Runtime and Numerical Stability.}
To evaluate the efficiency and numerical stability of proposed solvers, we use synthetic data with noise-free observations. 
We generate $1000$ random scenes to evaluate the performance. 
The metrics for a batch of line observations ($M=5$ and $N=100$) are shown in \cref{tab:rslt-adam-M2-N10}. The reported runtime is the median runtime across all tested scenes.
As shown in \cref{tab:rslt-adam-M2-N10}, the runtimes of the proposed solvers are $16.7\sim 48.7$ ms.

We further define the \emph{success rate} for the solvers. 
Angular velocity recovery is considered as successful if the error $\varepsilon_{\text{ang}}$ is below a given threshold. In this experiment, the thresholds to define success are set to $0.01$ and $0.05$. 
From \cref{tab:rslt-adam-M2-N10}, both of the \scenario{IncBat} and \scenario{CopBat} solvers have good accuracy, numerical stability, and success rates. Additionally, the rotation parametrization significantly impacts the performance. The cascade of rotation parameterizations achieves the best accuracy and efficiency. In the following, we will use the cascade method by default.

{\bf Analysis of the Number of Events.}
We set two kinds of noise at different levels during event simulation: pixel noise ($0.5$ pix) and timestamp jitter ($0.5$ ms).
The noise follows a zero-mean Gaussian. 
We fix the the number of lines as $M = 10$ and vary the event number $N$ from $8$ to $1000$ for each line. Each setting is tested $1000$ times with random synthetic data. The accuracy of the results is shown in \cref{fig:synthetic}(a)(b). 
\textcolor{black}{\scenario{IncBat} outperforms \scenario{CopBat} when the event number is below $100$. They tend to have comparable accuracy when increasing event number.} The errors of both solvers decrease rapidly as the number of events increases.

{\bf Analysis of the Number of Lines.}
We set the noise levels for pixel and timestamp as $0.5$ pix and $0.5$ ms, respectively.
We fix the the number of events for each line as $N = 100$ and vary the line number $M$ from $1$ to $50$. Each setting is tested $1000$ times using random synthetic data. The accuracy of the results is shown in \cref{fig:synthetic}(c)(d).
The errors decrease significantly as the number of lines increases. 
When $N=1$, none of the methods perform well. 
This can be attributed to the rotation-translation ambiguity that occurs for a single line cluster.
The errors of both solvers decrease rapidly as the number of lines increases.
The supplementary material provides further analysis and landscape visualization for the objective functions.

{\bf Noise Resilience Analysis.}
We evaluate the noise resilience of the proposed solvers by testing them with varying noise levels in both pixels and timestamps.
The results are shown in \cref{fig:synthetic}(e)-(h).
We can see that errors increase as noise levels increase.
In the absence of noise, our solvers are nearly always successful.
\textcolor{black}{\scenario{IncBat} has smaller errors than \scenario{CopBat} when noise level is small. They tend to have comparable accuracy when noise levels become large.} 

\subsection{Real-World Experiment}
\begin{table}[tbp]
	\centering
	\setlength{\tabcolsep}{7pt} 
	\begin{tabular}{|l|c|c|c|c|}
		\hline
		\multirow{2}{*}{Seq. Name} & \multicolumn{2}{c|}{\scenario{IncBat}} & \multicolumn{2}{c|}{\scenario{CopBat}} \\ \cline{2-5}
		& $\varepsilon_{\text{ang}}$ & $\varepsilon_{\text{lin}}$ ($^\circ$) & $\varepsilon_{\text{ang}}$ & $\varepsilon_{\text{lin}}$ ($^\circ$) \\ \hline
		\emph{desk-normal} & $0.232$ & $23.0$ & $0.236$ & $25.1$ \\ \hline
		\emph{mountain-normal} & $0.195$ & $17.5$ & $0.216$ & $18.7$ \\ \hline
		\emph{sofa-normal} & $0.229$ & $21.1$ & $0.221$ & $20.6$ \\ \hline
	\end{tabular}
	\caption{Real-world experiment results. We report the median errors for $\varepsilon_{\text{ang}}$ and $\varepsilon_{\text{lin}}$.}
	\label{tab:real}
\end{table}
 
We use three sequences from the VECtor dataset~\cite{gao2022vector}, which contain clear line structures. The event sequences are segmented into non-overlapping intervals of $0.3$ seconds each. We select intervals that approximately satisfy the constant velocity assumption.
Line clusters are extracted using the Hough transform, and normal flow is computed using \cite{benosman2013event,hordijk2018vertical}.
The GC-RANSAC~\cite{barath2022graph} is used as the robust estimation framework. 

The accuracy of recovered angular and linear velocities is reported in \cref{tab:real}.
The results indicate that the proposed solver is applicable to real-world data. Such results are typically sufficient for integration into a VIO/SLAM pipeline, as demonstrated in \cite{peng2021continuous,xu2023tight}.
More experimental settings and results are provided in the supplementary material.

\section{Conclusion}
\label{sec:conclusion}

In this paper, we have filled a crucial gap in egomotion estimation for event cameras by developing sparse geometric solvers. They are capable of recovering full-DoF egomotion parameters—both rotational and translational velocities—without relying on additional sensors. Our method uses event manifolds from line segments, employing incidence relations or a novel coplanarity relation. By leveraging the Adam optimizer and the first-order rotation approximations for efficient initialization, we demonstrate effective performance of the proposed solvers on synthetic and real-world datasets.

\noindent\textbf{Acknowledgments}
{\small
This research has been supported by the National Natural Science Foundation of China (Grant No. 12372189) and the Hunan Provincial Natural Science Foundation for Excellent Young Scholars (Grant No. 2023JJ20045).
The authors also acknowledge the funding support provided by projects 22DZ1201900, 22ZR1441300, and DFYJBJ-1 by the Natural Science Foundation of Shanghai.
}

{
    \small
    \bibliographystyle{ieeenat_fullname}
    \bibliography{main}

\begin{thebibliography}{58}
\providecommand{\natexlab}[1]{#1}
\providecommand{\url}[1]{\texttt{#1}}
\expandafter\ifx\csname urlstyle\endcsname\relax
  \providecommand{\doi}[1]{doi: #1}\else
  \providecommand{\doi}{doi: \begingroup \urlstyle{rm}\Url}\fi

\bibitem[Barath and Matas(2022)]{barath2022graph}
Daniel Barath and Jiri Matas.
\newblock Graph-cut {RANSAC}: Local optimization on spatially coherent
  structures.
\newblock \emph{IEEE TPAMI}, 44\penalty0 (9):\penalty0 4961--4974, 2022.

\bibitem[Barranco et~al.(2021)Barranco, Ferm{\"u}ller, Aloimonos, and
  Ros]{barranco2021joint}
Francisco Barranco, Cornelia Ferm{\"u}ller, Yiannis Aloimonos, and Eduardo Ros.
\newblock Joint direct estimation of 3{D} geometry and 3{D} motion using spatio
  temporal gradients.
\newblock \emph{Pattern Recognition}, 113:\penalty0 107759, 2021.

\bibitem[Benosman et~al.(2013)Benosman, Clercq, Lagorce, Ieng, and
  Bartolozzi]{benosman2013event}
Ryad Benosman, Charles Clercq, Xavier Lagorce, Sio-Hoi Ieng, and Chiara
  Bartolozzi.
\newblock Event-based visual flow.
\newblock \emph{IEEE Transactions on Neural Networks and Learning Systems},
  25\penalty0 (2):\penalty0 407--417, 2013.

\bibitem[Br{\"a}ndli et~al.(2016)Br{\"a}ndli, Strubel, Keller, Scaramuzza, and
  Delbruck]{brandli2016elised}
Christian Br{\"a}ndli, Jonas Strubel, Susanne Keller, Davide Scaramuzza, and
  Tobi Delbruck.
\newblock {ELiSeD} - an event-based line segment detector.
\newblock In \emph{International Conference on Event-based Control,
  Communication, and Signal Processing (EBCCSP)}, pages 1--7, 2016.

\bibitem[Chng et~al.(2020)Chng, Parra, Chin, and Latif]{chng2020monocular}
Chee-Kheng Chng, Alvaro Parra, Tat-Jun Chin, and Yasir Latif.
\newblock Monocular rotational odometry with incremental rotation averaging and
  loop closure.
\newblock In \emph{Digital Image Computing: Techniques and Applications
  (DICTA)}, pages 1--8, 2020.

\bibitem[Everding and Conradt(2018)]{everding2018low}
Lukas Everding and J{\"o}rg Conradt.
\newblock Low-latency line tracking using event-based dynamic vision sensors.
\newblock \emph{Frontiers in Neurorobotics}, 12:\penalty0 4, 2018.

\bibitem[Gallego and Scaramuzza(2017)]{gallego2017accurate}
Guillermo Gallego and Davide Scaramuzza.
\newblock Accurate angular velocity estimation with an event camera.
\newblock \emph{IEEE RAL}, 2\penalty0 (2):\penalty0 632--639, 2017.

\bibitem[Gallego et~al.(2018)Gallego, Rebecq, and
  Scaramuzza]{gallego2018unifying}
Guillermo Gallego, Henri Rebecq, and Davide Scaramuzza.
\newblock A unifying contrast maximization framework for event cameras, with
  applications to motion, depth, and optical flow estimation.
\newblock In \emph{CVPR}, pages 3867--3876, 2018.

\bibitem[Gallego et~al.(2019)Gallego, Gehrig, and Scaramuzza]{gallego2019focus}
Guillermo Gallego, Mathias Gehrig, and Davide Scaramuzza.
\newblock Focus is all you need: Loss functions for event-based vision.
\newblock In \emph{CVPR}, pages 12280--12289, 2019.

\bibitem[Gallego et~al.(2022)Gallego, Delbr{\"u}ck, Orchard, Bartolozzi, Taba,
  Censi, Leutenegger, Davison, Conradt, Daniilidis, and
  Scaramuzza]{gallego2022event}
Guillermo Gallego, Tobi Delbr{\"u}ck, Garrick Orchard, Chiara Bartolozzi, Brian
  Taba, Andrea Censi, Stefan Leutenegger, Andrew~J Davison, J{\"o}rg Conradt,
  Kostas Daniilidis, and Davide Scaramuzza.
\newblock Event-based vision: A survey.
\newblock \emph{IEEE TPAMI}, 44\penalty0 (1):\penalty0 154--180, 2022.

\bibitem[Gao et~al.(2022)Gao, Liang, Yang, Wu, Wang, Chen, and
  Kneip]{gao2022vector}
Ling Gao, Yuxuan Liang, Jiaqi Yang, Shaoxun Wu, Chenyu Wang, Jiaben Chen, and
  Laurent Kneip.
\newblock {VECtor}: A versatile event-centric benchmark for multi-sensor
  {SLAM}.
\newblock \emph{IEEE RAL}, 7\penalty0 (3):\penalty0 8217--8224, 2022.

\bibitem[Gao et~al.(2023)Gao, Su, Gehrig, Cannici, Scaramuzza, and
  Kneip]{gao2023fivepoint}
Ling Gao, Hang Su, Daniel Gehrig, Marco Cannici, Davide Scaramuzza, and Laurent
  Kneip.
\newblock A 5-point minimal solver for event camera relative motion estimation.
\newblock In \emph{ICCV}, pages 8049--8059, 2023.

\bibitem[Gao et~al.(2024)Gao, Gehrig, Su, Scaramuzza, and Kneip]{gao2024npoint}
Ling Gao, Daniel Gehrig, Hang Su, Davide Scaramuzza, and Laurent Kneip.
\newblock An {$N$}-point linear solver for line and motion estimation with
  event cameras.
\newblock In \emph{CVPR}, 2024.

\bibitem[Gehrig et~al.(2020)Gehrig, Shrestha, Mouritzen, and
  Scaramuzza]{gehrig2020event}
Mathias Gehrig, Sumit~Bam Shrestha, Daniel Mouritzen, and Davide Scaramuzza.
\newblock Event-based angular velocity regression with spiking metworks.
\newblock In \emph{ICRA}, pages 4195--4202, 2020.

\bibitem[Gu et~al.(2021)Gu, Learned-Miller, Sheldon, Gallego, and
  Bideau]{gu2021spatio}
Cheng Gu, Erik Learned-Miller, Daniel Sheldon, Guillermo Gallego, and Pia
  Bideau.
\newblock The spatio-temporal poisson point process: A simple model for the
  alignment of event camera data.
\newblock In \emph{ICCV}, pages 13495--13504, 2021.

\bibitem[Guan et~al.(2024)Guan, Zhao, and Kneip]{guan2024six}
Banglei Guan, Ji Zhao, and Laurent Kneip.
\newblock Six-point method for multi-camera systems with reduced solution
  space.
\newblock In \emph{ECCV}, 2024.

\bibitem[Guo and Gallego(2024)]{guo2024cmax}
Shuang Guo and Guillermo Gallego.
\newblock {CMax-SLAM}: Event-based rotational-motion bundle adjustment and
  {SLAM} system using contrast maximization.
\newblock \emph{IEEE Transactions on Robotics}, 40:\penalty0 2442--2461, 2024.

\bibitem[Hamann et~al.(2024)Hamann, Wang, Asmanis, Chaney, Gallego, and
  Daniilidis]{hamann2024motion}
Friedhelm Hamann, Ziyun Wang, Ioannis Asmanis, Kenneth Chaney, Guillermo
  Gallego, and Kostas Daniilidis.
\newblock Motion-prior contrast maximization for dense continuous-time motion
  estimation.
\newblock In \emph{ECCV}, 2024.

\bibitem[Helmke et~al.(2007)Helmke, H{\"u}per, Lee, and
  Moore]{helmke2007essential}
Uwe Helmke, Knut H{\"u}per, Pei~Yean Lee, and John Moore.
\newblock Essential matrix estimation using gauss-newton iterations on a
  manifold.
\newblock \emph{IJCV}, 74\penalty0 (2):\penalty0 117--136, 2007.

\bibitem[Hordijk et~al.(2018)Hordijk, Scheper, and {De
  Croon}]{hordijk2018vertical}
Bas J.~Pijnacker Hordijk, Kirk~Y.W. Scheper, and Guido~C.H.E. {De Croon}.
\newblock Vertical landing for micro air vehicles using event-based optical
  flow.
\newblock \emph{Journal of Field Robot}, 35\penalty0 (1):\penalty0 69--90,
  2018.

\bibitem[Huang et~al.(2023)Huang, Zhang, and Xiong]{huang2023progressive}
Xueyan Huang, Yueyi Zhang, and Zhiwei Xiong.
\newblock Progressive spatio-temporal alignment for efficient event-based
  motion estimation.
\newblock In \emph{CVPR}, pages 1537--1546, 2023.

\bibitem[Ieng et~al.(2017)Ieng, Carneiro, and Benosman]{ieng2017event}
Sio-Hoi Ieng, Jo{\~a}o Carneiro, and Ryad~B Benosman.
\newblock Event-based 3{D} motion flow estimation using 4{D} spatio temporal
  subspaces properties.
\newblock \emph{Frontiers in Neuroscience}, 10:\penalty0 596, 2017.

\bibitem[Kim et~al.(2014)Kim, Handa, Benosman, Ieng, and
  Davison]{kim2014simultaneous}
Hanme Kim, Ankur Handa, Ryad Benosman, Sio-Hoi Ieng, and Andrew~J. Davison.
\newblock Simultaneous mosaicing and tracking with an event camera.
\newblock In \emph{BMVC}, 2014.

\bibitem[Kingma and Ba(2015)]{kingma2015adam}
Diederik~P Kingma and Jimmy Ba.
\newblock Adam: A method for stochastic optimization.
\newblock In \emph{ICLR}, 2015.

\bibitem[Klenk et~al.(2023)Klenk, Motzet, Koestler, and Cremers]{klenk2023deep}
S. Klenk, M. Motzet, L. Koestler, and D. Cremers.
\newblock Deep event visual odometry.
\newblock In \emph{International Conference on 3D Vision (3DV)}, 2023.

\bibitem[Kneip and Furgale(2014)]{kneip2014opengv}
Laurent Kneip and Paul Furgale.
\newblock Open{GV}: {A} unified and generalized approach to real-time
  calibrated geometric vision.
\newblock In \emph{ICRA}, pages 12034--12043, 2014.

\bibitem[Kneip and Lynen(2013)]{kneip2013direct}
Laurent Kneip and Simon Lynen.
\newblock Direct optimization of frame-to-frame rotation.
\newblock In \emph{ICCV}, pages 2352--2359, 2013.

\bibitem[Lagorce et~al.(2017)Lagorce, Orchard, Galluppi, Shi, and
  Benosman]{lagorce17hots}
Xavier Lagorce, Garrick Orchard, Francesco Galluppi, Bertram~E. Shi, and
  Ryad~B. Benosman.
\newblock {HOTS}: A hierarchy of event-based time-surfaces for pattern
  recognition.
\newblock \emph{IEEE TPAMI}, 39\penalty0 (7):\penalty0 1346--1359, 2017.

\bibitem[{Le Gentil} et~al.(2020){Le Gentil}, Tschopp, Alzugaray,
  Vidal-Calleja, Siegwart, and Nieto]{legentil2020idol}
Cedric {Le Gentil}, Florian Tschopp, Ignacio Alzugaray, Teresa Vidal-Calleja,
  Roland Siegwart, and Juan Nieto.
\newblock {IDOL}: A framework for {IMU-DVS} odometry using lines.
\newblock In \emph{IROS}, pages 5863--5870, 2020.

\bibitem[Lee and Civera(2019)]{lee2019closed}
Seong~Hun Lee and Javier Civera.
\newblock Closed-form optimal two-view triangulation based on angular errors.
\newblock In \emph{ICCV}, pages 2681--2689, 2019.

\bibitem[Lee and Civera(2021)]{lee2021rotation}
Seong~Hun Lee and Javier Civera.
\newblock Rotation-only bundle adjustment.
\newblock In \emph{CVPR}, pages 424--433, 2021.

\bibitem[Liu et~al.(2020)Liu, Parra, and Chin]{liu2020globally}
Daqi Liu, Alvaro Parra, and Tat-Jun Chin.
\newblock Globally optimal contrast maximisation for event-based motion
  estimation.
\newblock In \emph{CVPR}, pages 6349--6358, 2020.

\bibitem[Liu et~al.(2021)Liu, Parra, and Chin]{liu2021spatiotemporal}
Daqi Liu, Alvaro Parra, and Tat-Jun Chin.
\newblock Spatiotemporal registration for event-based visual odometry.
\newblock In \emph{CVPR}, pages 4937--4946, 2021.

\bibitem[Longuet-Higgins and Prazdny(1980)]{longuet1980interpretation}
Hugh~Christopher Longuet-Higgins and Kvetoslav Prazdny.
\newblock The interpretation of a moving retinal image.
\newblock \emph{Proceedings of the Royal Society of London. Series B.
  Biological Sciences}, 208\penalty0 (1173):\penalty0 385--397, 1980.

\bibitem[Lu et~al.(2024)Lu, Zhou, Niu, Zhong, and Shen]{lu2024event}
Xiuyuan Lu, Yi Zhou, Junkai Niu, Sheng Zhong, and Shaojie Shen.
\newblock Event-based visual inertial velometer.
\newblock In \emph{Robotics: Science and Systems}, 2024.

\bibitem[Lynch and Park(2017)]{lynch2017modern}
Kevin~M. Lynch and Frank~C. Park.
\newblock \emph{Modern Robotics: Mechanics, Planning, and Control}.
\newblock Cambridge University Press, 2017.

\bibitem[Ma et~al.(2004)Ma, Soatto, Koseck{\'a}, and Sastry]{ma2004invitation}
Yi Ma, Stefano Soatto, Jana Koseck{\'a}, and S.~Shankar Sastry.
\newblock \emph{An Invitation to 3-{D} Vision: From Images to Geometric
  Models}.
\newblock Springer, 2004.

\bibitem[Mueggler et~al.(2018)Mueggler, Gallego, Rebecq, and
  Scaramuzza]{mueggler2018continuous}
Elias Mueggler, Guillermo Gallego, Henri Rebecq, and Davide Scaramuzza.
\newblock Continuous-time visual-inertial odometry for event cameras.
\newblock \emph{IEEE Transactions on Robotics}, 34\penalty0 (6):\penalty0
  1425--1440, 2018.

\bibitem[Muhle et~al.(2022)Muhle, Koestler, Demmel, Bernard, and
  Cremers]{muhle2022probabilistic}
Dominik Muhle, Lukas Koestler, Nikolaus Demmel, Florian Bernard, and Daniel
  Cremers.
\newblock The probabilistic normal epipolar constraint for frame-to-frame
  rotation optimization under uncertain feature positions.
\newblock In \emph{CVPR}, pages 1819--1828, 2022.

\bibitem[Nunes and Demiris(2022)]{nunes2022robust}
Urbano~Miguel Nunes and Yiannis Demiris.
\newblock Robust event-based vision model estimation by dispersion
  minimisation.
\newblock \emph{IEEE TPAMI}, 44\penalty0 (12):\penalty0 9561--9573, 2022.

\bibitem[Paredes-Vall{\'e}s and {de Croon}(2021)]{paredes-valles2021back}
Federico Paredes-Vall{\'e}s and Guido C. H.~E. {de Croon}.
\newblock Back to event basics: Self-supervised learning of image
  reconstruction for event cameras via photometric constancy.
\newblock In \emph{CVPR}, pages 3445--3454, 2021.

\bibitem[Pellerito et~al.(2024)Pellerito, Cannici, Gehrig, Belhadj,
  Dubois-Matra, Casasco, and Scaramuzza]{pellerito2024deep}
Roberto Pellerito, Marco Cannici, Daniel Gehrig, Joris Belhadj, Olivier
  Dubois-Matra, Massimo Casasco, and Davide Scaramuzza.
\newblock Deep visual odometry with events and frames.
\newblock In \emph{IROS}, 2024.

\bibitem[Peng et~al.(2021{\natexlab{a}})Peng, Gao, Wang, and
  Kneip]{peng2021globally}
Xin Peng, Ling Gao, Yifu Wang, and Laurent Kneip.
\newblock Globally-optimal contrast maximisation for event cameras.
\newblock \emph{IEEE TPAMI}, 44\penalty0 (7):\penalty0 3479--3495,
  2021{\natexlab{a}}.

\bibitem[Peng et~al.(2021{\natexlab{b}})Peng, Xu, Yang, and
  Kneip]{peng2021continuous}
Xin Peng, Wanting Xu, Jiaqi Yang, and Laurent Kneip.
\newblock Continuous event-line constraint for closed-form velocity
  initialization.
\newblock In \emph{BMVC}, 2021{\natexlab{b}}.

\bibitem[Reinbacher et~al.(2017)Reinbacher, Munda, and Pock]{reinbacher2017}
Christian Reinbacher, Gottfried Munda, and Thomas Pock.
\newblock Real-time panoramic tracking for event cameras.
\newblock In \emph{ICCP}, 2017.

\bibitem[Ren et~al.(2024)Ren, Liao, Kong, Li, Liu, Kneip, Gallego, and
  Zhou]{ren2024motion}
Zhongyang Ren, Bangyan Liao, Delei Kong, Jinghang Li, Peidong Liu, Laurent
  Kneip, Guillermo Gallego, and Yi Zhou.
\newblock Motion and structure from event-based normal flow.
\newblock In \emph{ECCV}, 2024.

\bibitem[Seok and Lim(2020)]{seok2020robust}
Hochang Seok and Jongwoo Lim.
\newblock Robust feature tracking in {DVS} event stream using b{\'e}zier
  mapping.
\newblock In \emph{WACV}, pages 1658--1667, 2020.

\bibitem[Shiba et~al.(2024)Shiba, Klose, Aoki, and Gallego]{shiba2024secrets}
Shintaro Shiba, Yannick Klose, Yoshimitsu Aoki, and Guillermo Gallego.
\newblock Secrets of event-based optical flow, depth and ego-motion estimation
  by contrast maximization.
\newblock \emph{IEEE TPAMI}, pages 1--18, 2024.

\bibitem[Stew{\'e}nius et~al.(2007)Stew{\'e}nius, Engels, and
  Nist{\'e}r]{stewenius2007efficient}
Henrik Stew{\'e}nius, Chris Engels, and David Nist{\'e}r.
\newblock An efficient minimal solution for infinitesimal camera motion.
\newblock In \emph{CVPR}, 2007.

\bibitem[Ventura et~al.(2014)Ventura, Arth, and
  Lepetit]{ventura2014approximated}
Jonathan Ventura, Clemens Arth, and Vincent Lepetit.
\newblock Approximated relative pose solvers for efficient camera motion
  estimation.
\newblock In \emph{ECCV Workshops}, pages 180--193, 2014.

\bibitem[Ventura et~al.(2015)Ventura, Arth, and Lepetit]{ventura2015efficient}
Jonathan Ventura, Clemens Arth, and Vincent Lepetit.
\newblock An efficient minimal solution for multi-camera motion.
\newblock In \emph{ICCV}, pages 747--755, 2015.

\bibitem[Vidal et~al.(2018)Vidal, Rebecq, Horstschaefer, and
  Scaramuzza]{vidal2018ultimate}
Antoni~Rosinol Vidal, Henri Rebecq, Timo Horstschaefer, and Davide Scaramuzza.
\newblock Ultimate {SLAM}? combining events, images, and {IMU} for robust
  visual {SLAM} in {HDR} and high-speed scenarios.
\newblock \emph{IEEE RAL}, 3\penalty0 (2):\penalty0 994--1001, 2018.

\bibitem[Xu et~al.(2023)Xu, Peng, and Kneip]{xu2023tight}
Wanting Xu, Xin Peng, and Laurent Kneip.
\newblock Tight fusion of events and inertial measurements for direct velocity
  estimation.
\newblock \emph{IEEE Transactions on Robotics}, 40:\penalty0 240--256, 2023.

\bibitem[Xu et~al.(2024{\natexlab{a}})Xu, Cui, Peng, and Kneip]{xu2024event}
Wanting Xu, Li Cui, Xin Peng, and Laurent Kneip.
\newblock Event-based visual odometry on non-holonomic ground vehicles.
\newblock In \emph{International Conference on 3D Vision (3DV)}, pages
  831--841, 2024{\natexlab{a}}.

\bibitem[Xu et~al.(2024{\natexlab{b}})Xu, He, Wei, Xu, Xie, and
  Wu]{xu2024accurate}
Zewen Xu, Yijia He, Hao Wei, Bo Xu, BinJian Xie, and Yihong Wu.
\newblock An accurate and real-time relative pose estimation from triple
  point-line images by decoupling rotation and translation.
\newblock \emph{arXiv preprint arXiv:2403.11639}, 2024{\natexlab{b}}.

\bibitem[Ye et~al.(2020)Ye, Mitrokhin, Ferm{\"u}ller, Yorke, and
  Aloimonos]{ye2020unsupervised}
Chengxi Ye, Anton Mitrokhin, Cornelia Ferm{\"u}ller, James~A Yorke, and Yiannis
  Aloimonos.
\newblock Unsupervised learning of dense optical flow, depth and egomotion with
  event-based sensors.
\newblock In \emph{IROS}, pages 5831--5838, 2020.

\bibitem[Zhu et~al.(2017)Zhu, Atanasov, and Daniilidis]{zhu2017event}
Alex~Zihao Zhu, Nikolay Atanasov, and Kostas Daniilidis.
\newblock Event-based visual inertial odometry.
\newblock In \emph{CVPR}, pages 5391--5399, 2017.

\bibitem[Zhu et~al.(2019)Zhu, Yuan, Chaney, and
  Daniilidis]{zhu2019unsupervised}
Alex~Zihao Zhu, Liangzhe Yuan, Kenneth Chaney, and Kostas Daniilidis.
\newblock Unsupervised event-based learning of optical flow, depth, and
  egomotion.
\newblock In \emph{CVPR}, pages 989--997, 2019.

\end{thebibliography}
}

\clearpage
\maketitlesupplementary
\appendix
\section*{Appendix}

\section{Optimization}

The comparison of the proposed optimization problems is summarized in \cref{tab:opt-problem-comp}.
\begin{table}[h]
	\centering
	\setlength{\tabcolsep}{4pt} 
	\begin{tabular}{lccccr}
		\toprule
		Problem & Geometry & obs. & \#line & var. & matrix \\
		\midrule
		\eqref{eq:opt1} & incidence & raw & $1$ & $\omg$ & $\M_{6\times 6}$ \\
		\eqref{eq:opt1-i} & incidence & raw & $\ge 2$ & $\omg$ & $(\M_i)_{6\times 6}$ \\
		\midrule
		\eqref{eq:opt_lmd_nf} & coplanarity & flow & $1$ & $\omg$ & $\N_{3\times 3}$ \\
		\eqref{eq:opt-lmd_nf_i} & coplanarity & flow & $\ge 2$ & $\omg$ & $(\N_i)_{3\times 3}$ \\
		\bottomrule
	\end{tabular}
	\caption{Comparison of optimization problems. obs: observations; var: variable; raw: raw events; flow: normal flow; Min: minimal configuration (a single line); Bat: a batch of lines.}
	\label{tab:opt-problem-comp}
\end{table}

\subsection{Remark}

\hspace{1em}{\bf Remark 1}: 
In optimization problems \eqref{eq:opt1-i} \eqref{eq:opt-lmd_nf_i} with a batch of line observations, the objectives are summation of minimal eigenvalues. It is also possible to replace $\lambda_{\text{min}}$ by a monotonically increasing function with respect to $\lambda_{\text{min}}$. Usually, we can use $\lambda^p_{\text{min}}$, where $p$ is an exponent. Popular selection of $p$ includes $1$, $2$, and $1/2$. The optimal selection of $p$ is determined by the noise level of observations. In this work, we simply set $p$ as $1$.

{\bf Remark 2}:
Given ground truth $\omg_{gt}$ and exact rotation parametrization, the smallest eigenvalue $\lambda_{\text{min}}(\M_i(\omg))$ is about $10^{-16}$. Given ground truth $\omg_{gt}$ and approximated rotation parametrization, $\lambda_{\text{min}}(\M_i(\omg))$ is about $10^{-12} \sim 10^{-9}$. 
Since the smallest eigenvalues of $\M_i(\omg)$ in our problems are significantly small, the floating number calculation causes insufficient accuracy and the termination conditions of optimization methods are easily triggered. To mitigate these issues, we multiply matrices $\{\M_i(\omg)\}_i$ by a factor of $10^{6}$. This does not have any influence on the optimum of $\omg$, and it only increases the objective $\lambda_{\text{min}}$ by a certain factor.

{\bf Remark 3}:
If an event has an associated weight, we can simply multiply the weight by the corresponding row of the matrix $\A$ or $\B$.


\section{Egomotion Estimation for Pure Rotation Cases}
\label{sec:pure-rot}

Pure rotational motion is often a degenerate case in relative pose estimation, especially for sparse geometric solvers. In this section, we will discuss how to deal with it.

\begin{proposition}
	Pure rotation leads to 
	\begin{align}
		\rank([\f'_1, \cdots, \f'_N]) = 2
		\label{eq:pure-rot}
	\end{align}
	for unique events $\{\f'_j\}_{j=1}^N$ of a 3D line. 
	Bearing vectors $\{\f'_j\}_{j=1}^N$ lies in a same plane whose normal is $\e_2$.
	\label{pro:pure-rot}
\end{proposition}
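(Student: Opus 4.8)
The plan is to derive the coplanarity assertion directly from the incidence relation \eqref{eq:incidence} and then read off the rank claim as an immediate consequence. First I would note that pure rotation means the linear velocity vanishes, $\v = \mathbf{0}$; by \eqref{eq:velo-trans} this forces the line-frame velocity $\u^\ell = \mathbf{0}$, so in particular $u^\ell_y = u^\ell_z = 0$. Substituting these into \eqref{eq:incidence} annihilates the first term and leaves ${\f'_j}^\intercal \e_2^\ell = 0$ for every event $j$. This is exactly the coplanarity statement: each $\f'_j$ is orthogonal to the single fixed nonzero vector $\e_2^\ell$, hence all bearing vectors lie in the plane through the origin whose normal is $\e_2^\ell$ (the $\e_2$ of the proposition).

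From here the bound $\rank([\f'_1,\dots,\f'_N]) \le 2$ is immediate, since every column of the matrix lies in the $2$-dimensional subspace $\{\x : \x^\intercal \e_2^\ell = 0\}$. To make the geometry transparent and to prepare the lower bound, I would give the physical reading of the algebra: under pure rotation the camera center $\C_j = t_j \v$ stays at the origin, so $\f'_j = \R_j \f_j$ is precisely the direction from the origin to the point on $\L$ that triggers event $e_j$. Using the construction of the line-dependent frame, $\e_2^\ell = \e_3^\ell \times \e_1^\ell = (-\mathbf{P}) \times \d = -\m$, so the spanning plane is exactly the plane through the origin and the line $\L$, with normal parallel to the line moment $\m$. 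This confirms the coplanarity both algebraically and geometrically.

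Finally, to upgrade $\rank \le 2$ to equality I would invoke the \emph{unique events} hypothesis. Two distinct points on a line that does not pass through the origin are observed along pairwise non-parallel rays, so two distinct events already contribute two linearly independent bearing vectors, giving $\rank \ge 2$; combined with the upper bound this yields $\rank = 2$. I expect the main obstacle to be not the upper bound, which is a one-line substitution, but making the genericity in the lower bound precise: one must exclude the degenerate configurations in which $\rank$ could drop to $1$, namely a line through the origin (where $\m = \mathbf{0}$ and the line-dependent frame $\R^\ell$ is ill-defined) and the coincidence of all bearing directions. Ruling these out is precisely the role of the phrase \emph{unique events of a 3D line} in the statement, and I would state this non-degeneracy explicitly as the hypothesis under which the lower bound holds.
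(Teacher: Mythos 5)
Your proposal is correct, but it reaches the conclusion by a different primary route than the paper and is in fact more complete. The paper's proof is a two-line geometric argument: under pure rotation the optical center never moves, every bearing vector $\f'_j$ therefore emanates from the origin and intersects the fixed 3D line $\L$, so all of them lie in the plane spanned by the optical center and $\L$, whose normal is $\e_2$. That is exactly the ``physical reading'' you include as a secondary confirmation; the paper never touches the incidence relation \eqref{eq:incidence} in this proof. Your main derivation instead specializes the algebra: $\v = \mathbf{0}$ forces $\u^\ell = \mathbf{0}$ via \eqref{eq:velo-trans} (valid since $\R^\ell \in \mathrm{SO}(3)$ is invertible), which collapses \eqref{eq:incidence} to ${\f'_j}^\intercal \e_2^\ell = 0$. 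This buys a direct link to the matrix formulation that the paper's Proposition~\ref{pro:null-space} relies on (it is literally the statement that $\e_2$ annihilates the relevant columns of $\A$), at the cost of importing the incidence relation as a lemma rather than arguing from first principles. Your identification $\e_2^\ell \propto -\m$ is also correct given the paper's unit normalization of $\mathbf{P}$.

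The second genuine difference is that you prove the lower bound $\rank \ge 2$, which the paper silently omits: its proof only establishes coplanarity, i.e.\ $\rank \le 2$, yet the proposition asserts equality. Your argument --- two distinct points on a line not passing through the optical center have linearly independent position vectors --- is the right one, and your observation that the degenerate case (line through the origin, where $\m = \mathbf{0}$, the frame $\R^\ell$ is undefined, and the rank drops to $1$) must be excluded is a point the paper does not make explicit. So your proof is sound and strictly more rigorous; the paper's is shorter and self-contained geometrically but leaves the rank equality and the non-degeneracy hypothesis implicit.
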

\begin{proof}
	Since the motion is pure rotation, the origin of any $\f'_j$ is the optical center. Meanwhile, any $\f'_j$ intersects with a 3D line. Thus bearing vectors $\{\f'_j\}_{j=1}^N$ lie on a same plane spanned by the optical center and the 3D line whose normal is $\e_2$. 
\end{proof}
When rotation is known, \cref{eq:pure-rot} provides a criterion to determine whether the motion is pure rotation, i.e., linear velocity $\v = \mathbf{0}$. 
We can identify the pure rotation cases and immediately know the linear velocity $\v$ is zero.

\begin{proposition}
	When pure rotation occurs, we have $\rank(\A) = \rank(\M) = 4$. The null space of $\A$ has two orthogonal basis $[\e_2; 0; 0; 0]$ and $[0; 0; 0; \e_2]$.
	\label{pro:null-space}
\end{proposition}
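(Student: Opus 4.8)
The plan is to pin down the null space of $\A$ explicitly and then prove that nothing further lies in it. First I would record the two structural facts supplied by \cref{pro:pure-rot}: under pure rotation every body-frame bearing satisfies ${\f'_j}^\intercal \e_2 = 0$, and the bearings span precisely the two-dimensional plane orthogonal to $\e_2$, i.e. $\rank([\f'_1,\cdots,\f'_N]) = 2$. Writing $\A = [TF \mid F]$ with $F = [\f'_1,\cdots,\f'_N]^\intercal$ ($N\times 3$, rows ${\f'_j}^\intercal$) and $T = \mathrm{diag}(t_1,\ldots,t_N)$, the relation $F\e_2 = \mathbf 0$ immediately gives $TF\e_2 = \mathbf 0$, so both $[\e_2;\mathbf 0]$ and $[\mathbf 0;\e_2]$ annihilate $\A$. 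These two $6$-vectors have disjoint support, are therefore orthogonal, and each is a unit vector since $\e_2$ is a column of $\R^\ell\in\mathrm{SO}(3)$; this already shows $\dim\ker(\A)\ge 2$, hence $\rank(\A)\le 4$.

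The substance is the reverse bound $\rank(\A)\ge 4$. I would take an arbitrary $[\x_1;\x_2]\in\ker(\A)$ and split each $\x_i$ into its component in the plane $\{\e_2\}^\perp = \mathrm{span}(\e_1,\e_3)$ and its $\e_2$-component. Because $\f'_j\perp\e_2$, the $\e_2$-components contribute nothing to $\A[\x_1;\x_2]$ — they account for exactly the two basis vectors already found — so it suffices to show the in-plane parts vanish. Introducing plane coordinates $\f'_j = \cos\phi_j\,\e_1 + \sin\phi_j\,\e_3$ and $\x_i = a_i\e_1 + c_i\e_3$, the null condition ${\f'_j}^\intercal(t_j\x_1 + \x_2) = 0$ becomes the homogeneous system whose $j$-th row is $[\,t_j\cos\phi_j,\ t_j\sin\phi_j,\ \cos\phi_j,\ \sin\phi_j\,]$ acting on $(a_1,c_1,a_2,c_2)$. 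Thus $\rank(\A)$ equals the rank of this reduced $N\times 4$ matrix, and the goal reduces to proving it has full column rank $4$.

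For that I would invoke the geometry of pure rotation: since the optical center is stationary ($\C_j = t_j\v = \mathbf 0$), each event's body-frame bearing is the fixed direction to the observed scene point, so the angular position $\phi_j$ is determined by which point of the line is seen, while the timestamp $t_j$ is an independent sampling parameter. Hence, over the distinct (``unique'') events the pairs $(t_j,\phi_j)$ vary in two genuinely independent directions, the four functions $t\cos\phi,\ t\sin\phi,\ \cos\phi,\ \sin\phi$ admit no nontrivial linear relation, and one can exhibit four events whose $4\times4$ minor is nonzero. This forces $(a_1,c_1,a_2,c_2)=\mathbf 0$, so $\ker(\A)$ is exactly the claimed $2$-plane and $\rank(\A)=4$. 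Finally, since $\M=\A^\intercal\A$ obeys $\M\x=\mathbf 0\iff\|\A\x\|=0\iff\A\x=\mathbf 0$, we get $\ker(\M)=\ker(\A)$ and therefore $\rank(\M)=\rank(\A)=4$ with the same orthogonal basis.

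I expect the main obstacle to be the full-column-rank claim for the reduced $N\times 4$ matrix in the third step: it is the only place where a genuine non-degeneracy assumption on the events enters, and making the independence of the timestamps $t_j$ and the angular positions $\phi_j$ precise — rather than merely asserting it — is the delicate point that the phrase \emph{unique events} in the statement is implicitly carrying.
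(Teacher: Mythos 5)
Your proposal is correct, and its outer skeleton --- producing the two null vectors $[\e_2;\mathbf{0};\mathbf{0};\mathbf{0}]$ and $[\mathbf{0};\mathbf{0};\mathbf{0};\e_2]$ from Proposition~\ref{pro:pure-rot}, then transferring the rank statement to $\M$ via $\ker(\M)=\ker(\A)$ --- is exactly the paper's proof. The genuine difference lies in what you put in between: the paper's proof stops after exhibiting the two null vectors, which establishes only $\dim\ker(\A)\ge 2$, i.e.\ $\rank(\A)\le 4$; the claimed equality $\rank(\A)=4$ is asserted, not argued. You supply the missing direction by splitting each half of an arbitrary kernel vector into its $\e_2$-component and its in-plane component, reducing the equality to full column rank of the $N\times 4$ matrix with rows $[t_j\cos\phi_j,\ t_j\sin\phi_j,\ \cos\phi_j,\ \sin\phi_j]$, and observing that a rank drop forces every event to satisfy $(a t_j+c)\cos\phi_j+(b t_j+d)\sin\phi_j=0$, i.e.\ to lie on a one-parameter curve in $(t,\phi)$-space. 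That reduction is the right one, and your self-diagnosis is accurate: the lower bound is genuinely a non-degeneracy statement about the event sample (events at sufficiently many distinct timestamps and distinct positions along the line, in general position), not a consequence of pure rotation alone --- and neither your sketch nor the paper turns this into a formal hypothesis. What your route buys is precisely this clarification of where genericity enters; what the paper's terseness buys is that only the direction it actually proves is used downstream: the failure of the solver of \cite{gao2024npoint} needs only $\rank(\A)<5$, and the recovery of $\e_2$ and of $\omg$ via the rank-deficient bottom-right minor of $\M$ needs only the null-space characterization, both of which follow from the upper bound alone.
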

\begin{proof}
	According to Proposition~\ref{pro:pure-rot}, bearing vectors $\{\f'_j\}_{j=1}^N$ lies in a same plane whose normal is $\e_2$, so does the vector set $\{t_j \f'_j\}_{j=1}^N$. Note that $[t_1 \f'_1, \cdots, t_N \f'_N]$ and $[\f'_1, \cdots, \f'_N]$ are the transpose of $\A_{:, 1:3}$ and $\A_{:, 4:6}$. In summary, both the left and right three columns of $\A$ has a null space basis $\e_2$. So $\A$ has two null space basis $[\e_2; 0; 0; 0]$ and $[0; 0; 0; \e_2]$.
	Since $\M = \A^\intercal \A$, $\M$ has the same rank as that of $\A$.
\end{proof}

In \cite{gao2024npoint}, the proposed solver can only deal with cases when $\rank(\A) \ge 5$. In Proposition~\ref{pro:null-space}, we have proved that $\rank(\A) = 4$. 
Thus it cannot deal with the pure rotation cases.
Our pure rotation identification method provides a supplement to the solver in~\cite{gao2024npoint}.
In addition, we can use  $\A_{:, 1:3}$ or $\A_{:, 4:6}$ to recover $\e_2$. For $\e_1$ and $\e_3$, they span a plane whose normal is $\e_2$, but they cannot be uniquely determined. So the complete line parameters cannot be recovered.

Additionally, we can recover the angular velocity $\omg$ for pure rotation cases within the incidence formulation.
According to Proposition~\ref{pro:null-space}, the top-left and bottom-right $3\times 3$ minor of $\M$ have a rank deficiency. So the angular velocity $\omg$ for pure rotation cases can be optimized by 
\begin{align}
	\argmin_{\omg} \ \sum_{i=1}^M \lambda_{\text{min}}(\widehat{\M}_i(\omg))
	\label{eq:opt1-pure-rot}
\end{align}
where $\widehat{\M}$ is bottom-right $3\times 3$ minor of $\M$, i.e., $\widehat{\M} = \M_{4:6, 4:6}$.

The proposed coplanarity formulation can recover the angular velocity $\omg$ natively for pure rotation cases.

\section{Experiments}

\begin{table*}[t]
	\centering
	\begin{tabular}{|l|l|r|r|r|r|r|}
		\hline
		formulation & rotation  & median $\varepsilon_{\text{ang}}$ & SR1(\%) & SR2(\%) & median objective & runtime \\ \hline
		\multirow{3}{*}{\scenario{IncMin}}
		& \scenario{+approx} & $3.0\times 10^{-1}$  & $0.8$ & $9.6$ & $5.6\times 10^{-11}$ & $9.6$ ms \\ 
		& \scenario{+exact} & $2.9\times 10^{-1}$  & $2.0$ & $10.2$ & $5.4\times 10^{-11}$ & $15.8$ ms \\ 
		& \scenario{+cascad} & $2.8\times 10^{-1}$  & $2.5$ & $11.0$ & $4.0\times 10^{-11}$ & $15.8$ ms \\ 
		\hline
		\multirow{3}{*}{\scenario{CopMin}} 
		& \scenario{+approx} & $5.8\times 10^{-1}$  & $0.0$ & $0.8$ & $4.3\times 10^{-10}$ & $15.7$ ms \\ 
		& \scenario{+exact} & $5.7\times 10^{-1}$  & $0.1$ & $1.3$ & $4.4\times 10^{-10}$ & $15.9$ ms \\ 
		& \scenario{+cascad} & $5.5\times 10^{-1}$  & $0.0$ & $0.7$ & $2.2\times 10^{-10}$ & $15.8$ ms \\ 
		\hline
	\end{tabular}
	\caption{Runtime and numerical stability for noise-free synthetic data. The configuration is $M=1$ and $N=100$. SR1 and SR2 represent success rate (SR) with thresholds of $0.01$ and $0.05$, respectively.
	\scenario{IncMin} and \scenario{CopMin} represent the incidence formulation and coplanarity formulation, respectively.}
	\label{tab:rslt-adam-M1-N8}
\end{table*}

\subsection{Simulation}

In the simulation tests, we observed that the recovered angular velocities were unsatisfactory when only a single line ($M = 1)$ was available. To investigate the underlying reasons for this phenomenon, we conducted additional experiments.

The runtime and numerical stability for the minimal configuration ($M=1$ and $N=100$) are presented in \cref{tab:rslt-adam-M1-N8}.
While the objective values $\lambda_{\text{min}}$ are quite small, the accuracy of all solvers remains inadequate, as evidenced by the significantly low success rates. This phenomenon can be intuitively explained.
With only one line, it is possible to orbit around that line while simultaneously adjusting the translational velocity to compensate for the added rotational velocity about an axis parallel to the line. This creates a form of rotation-translation ambiguity. Although this ambiguity might not be perfect, the constant velocity assumption in our motion model makes approximations in any case. Thus, it is reasonable to conclude that such ambiguity significantly impacts the performance.

To further validate this observation, we visualize the objective landscapes in \cref{fig:landscape-line}.
When only a single line is used, the objective functions lack a clear global minimum.
When two lines are used, the objective functions still have relatively large convergence regions.
However, when more than three lines are included, the objective functions exhibit distinct global minima.

\begin{figure*}[tbp]
	\centering
	\begin{subfigure}[b]{0.3\linewidth}
		\centering
		\includegraphics[width=\textwidth]{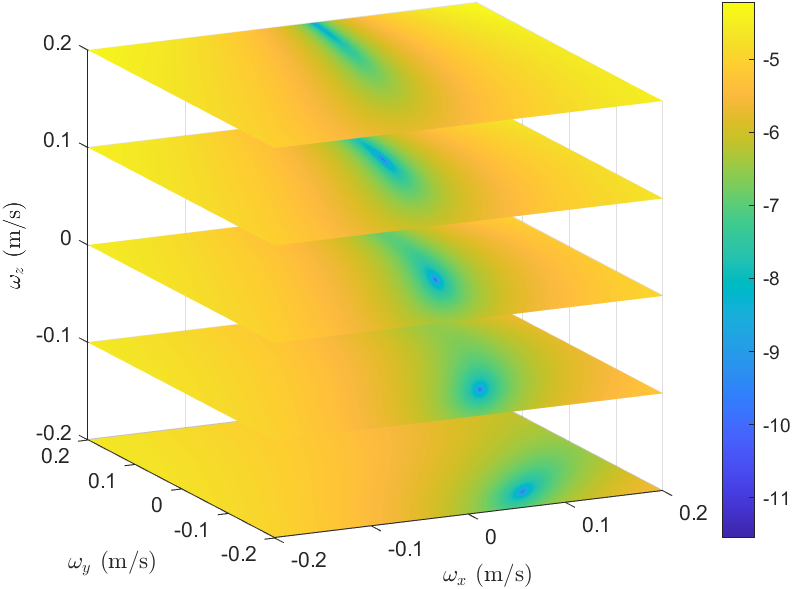}
		\caption{\scenario{IncMin} with $1$ line}
	\end{subfigure}
	\qquad
	\begin{subfigure}[b]{0.3\linewidth}
		\centering
		\includegraphics[width=\textwidth]{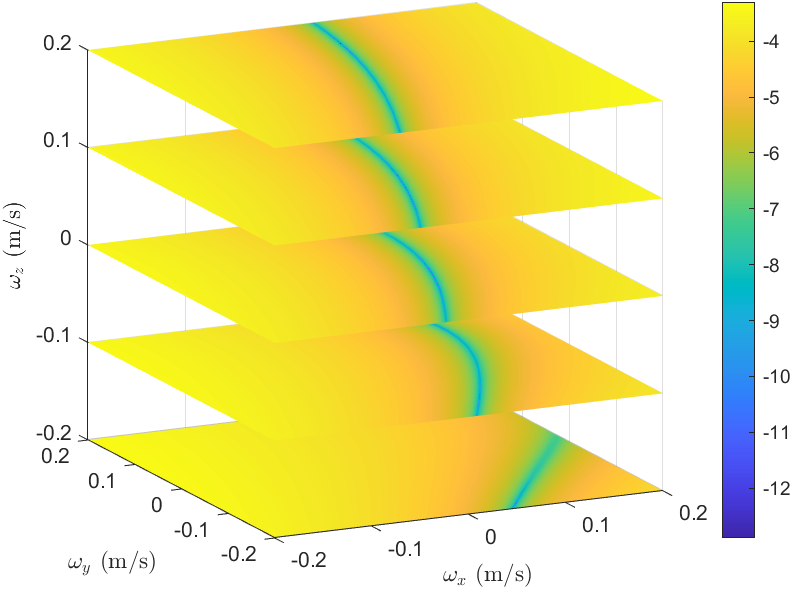}
		\caption{\scenario{CopMin} with $1$ line}
	\end{subfigure}
	\\
	\begin{subfigure}[b]{0.3\linewidth}
		\centering
		\includegraphics[width=\textwidth]{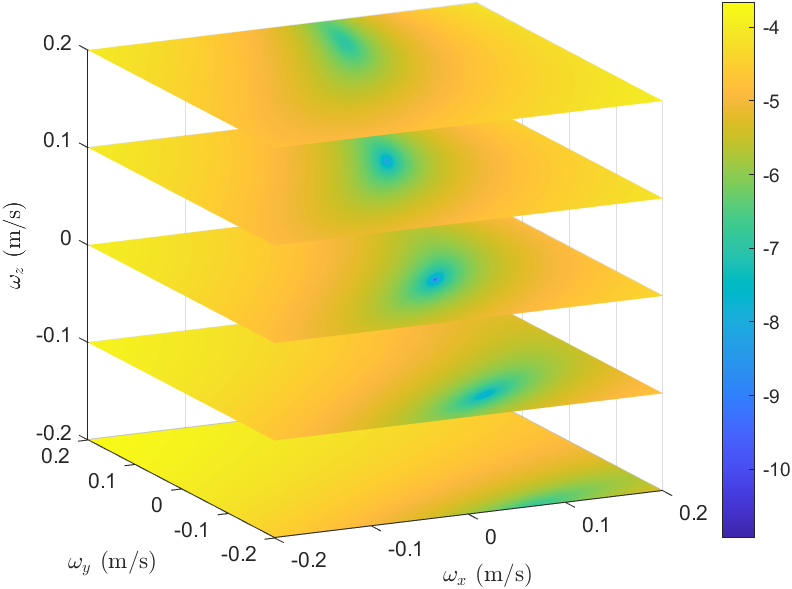}
		\caption{\scenario{IncBat} with $2$ lines}
	\end{subfigure}
	\qquad
	\begin{subfigure}[b]{0.3\linewidth}
		\centering
		\includegraphics[width=\textwidth]{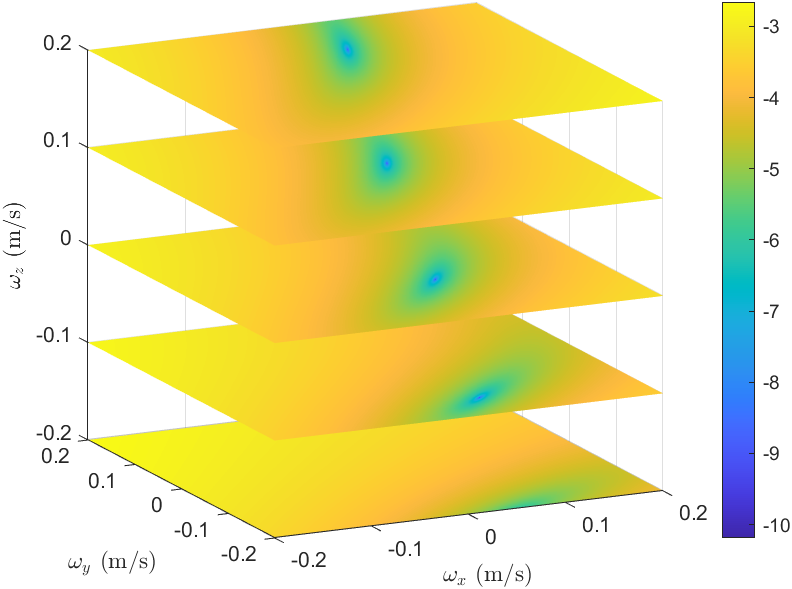}
		\caption{\scenario{CopBat} with $2$ lines}
	\end{subfigure}
	\\
	\begin{subfigure}[b]{0.3\linewidth}
		\centering
		\includegraphics[width=\textwidth]{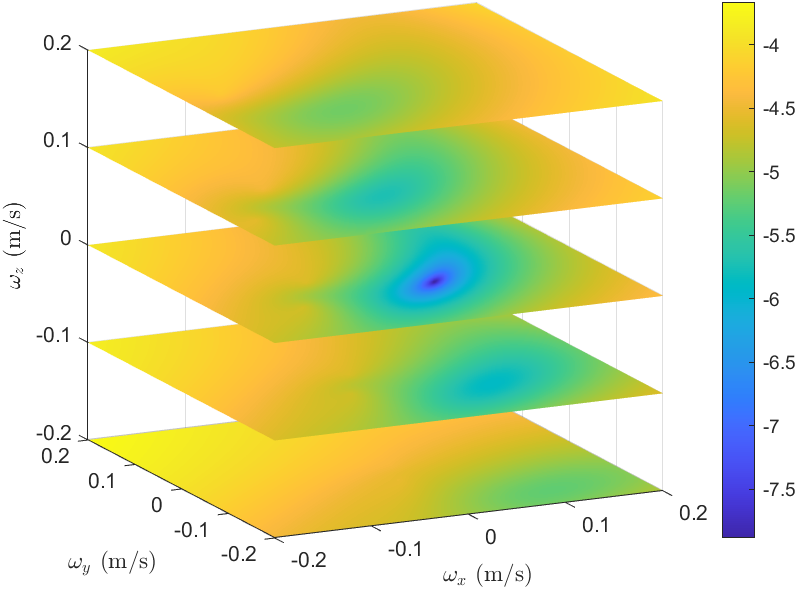}
		\caption{\scenario{IncBat} with $3$ lines}
	\end{subfigure}
	\qquad
	\begin{subfigure}[b]{0.3\linewidth}
		\centering
		\includegraphics[width=\textwidth]{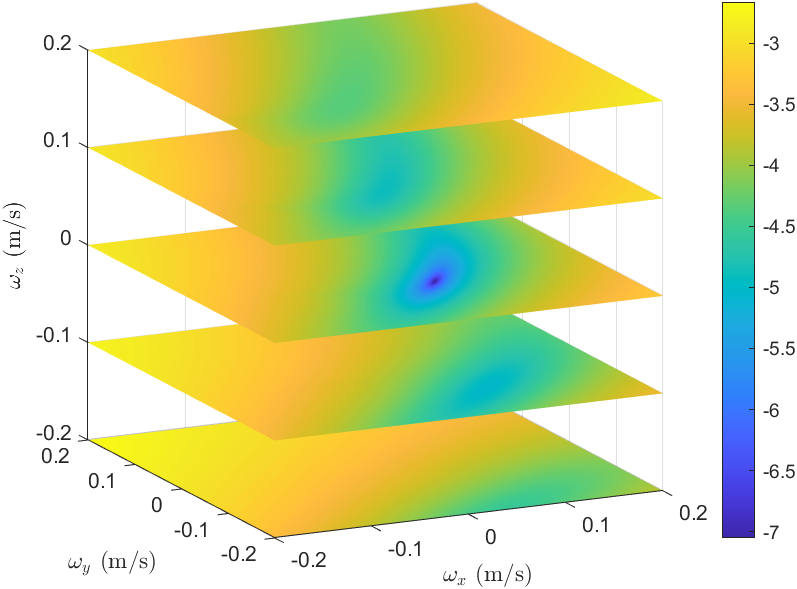}
		\caption{\scenario{CopBat} with $3$ lines}
	\end{subfigure}
	\\ 
	\begin{subfigure}[b]{0.3\linewidth}
		\centering
		\includegraphics[width=\textwidth]{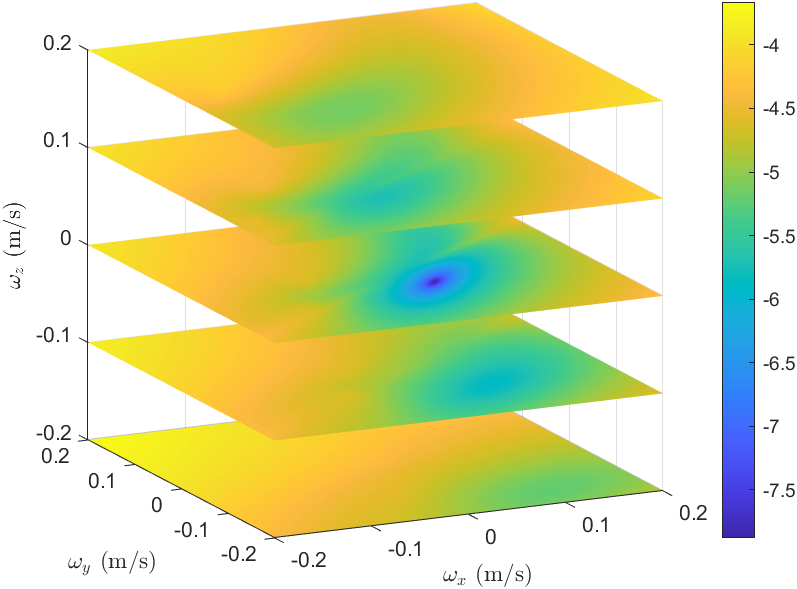}
		\caption{\scenario{IncBat} with $4$ lines}
	\end{subfigure}
	\qquad
	\begin{subfigure}[b]{0.3\linewidth}
		\centering
		\includegraphics[width=\textwidth]{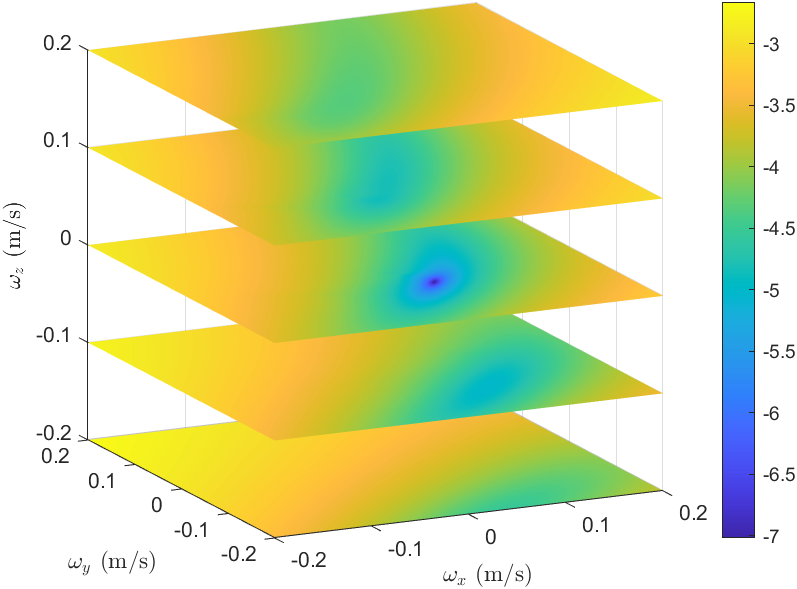}
		\caption{\scenario{CopBat} with $4$ lines}
	\end{subfigure}
	\\
	\begin{subfigure}[b]{0.3\linewidth}
		\centering
		\includegraphics[width=\textwidth]{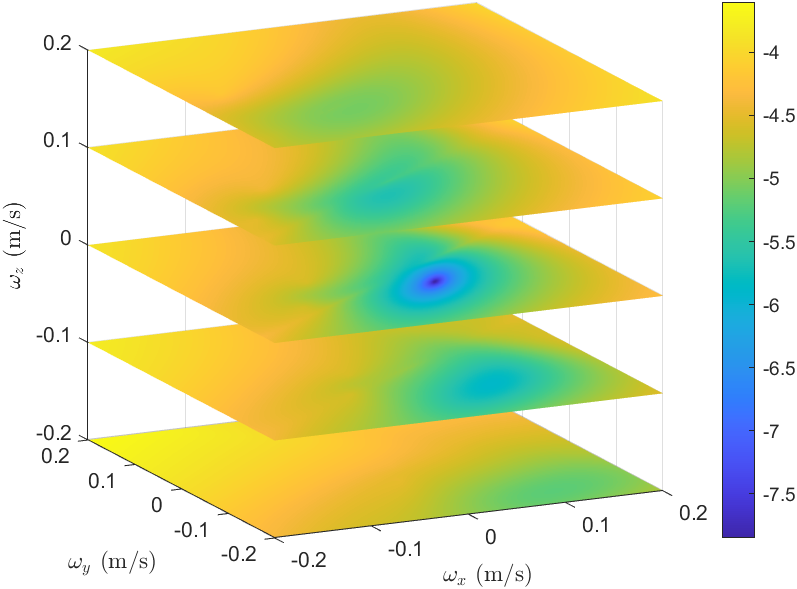}
		\caption{\scenario{IncBat} with $5$ lines}
	\end{subfigure}
	\qquad
	\begin{subfigure}[b]{0.3\linewidth}
		\centering
		\includegraphics[width=\textwidth]{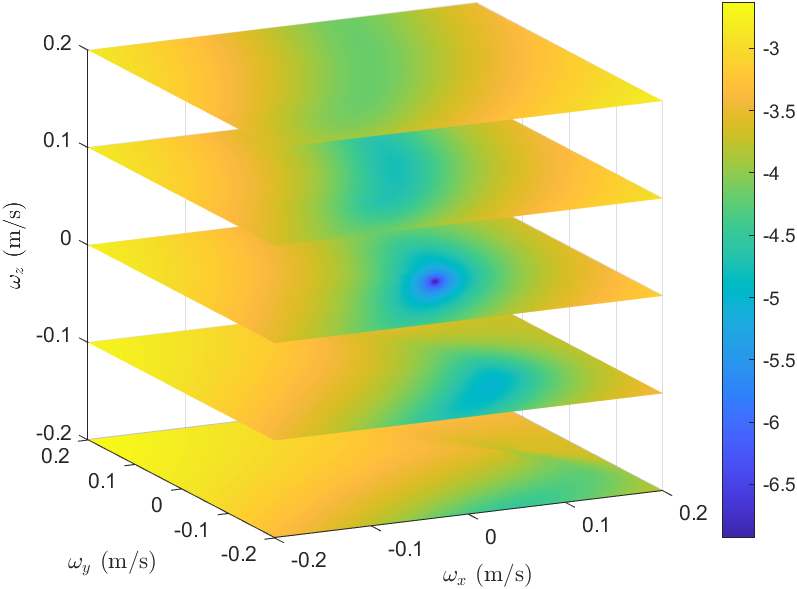}
		\caption{\scenario{CopBat} with $5$ lines}
	\end{subfigure}
	\\
	\caption{Landscape of the objective functions $\lambda_{\text{min}}$. The events for each line is set as $N = 100$. For better visualization, the pseudocolor and colormap of the objectives use the logarithmic scale.}
	\label{fig:landscape-line}
\end{figure*}

In the synthetic experiments, the success rates of solvers do not reach $100\%$ even without noise. There are several reasons for this.
First, for the approximative rotation parametrization, an approximation of the objective is implicitly introduced. As a result, even its global minimum may deviate slightly from the ground truth. This also explains why we used the results from approximative rotations as an initialization of the cascade method. Second, the objectives are non-convex, and the local optimization method may get trapped in local minima. Third, the objective's landscape may be relatively flat near the global minimum, causing local optimization methods to converge slowly or fail entirely to converge. This explains why the success rate varies when different thresholds are used to define success.

\subsection{Real-World Experiment}

As our solvers are the first capable of estimating both rotational and translational parameters, we evaluated their applicability on the VECtor dataset~\cite{gao2022vector}. This dataset includes VGA-resolution event recordings captured by a Gen3 Prophesee camera, $200$ Hz ground truth camera poses obtained via a MoCap system, and $200$ Hz readings from an XSens MTi-30 AHRS IMU. To improve efficiency, the events within each time interval are downsampled to $5000$.

\begin{figure*}[tbp]
	\centering
	\begin{subfigure}[b]{0.4\linewidth}
		\centering
		\fbox{\includegraphics[width=\textwidth]{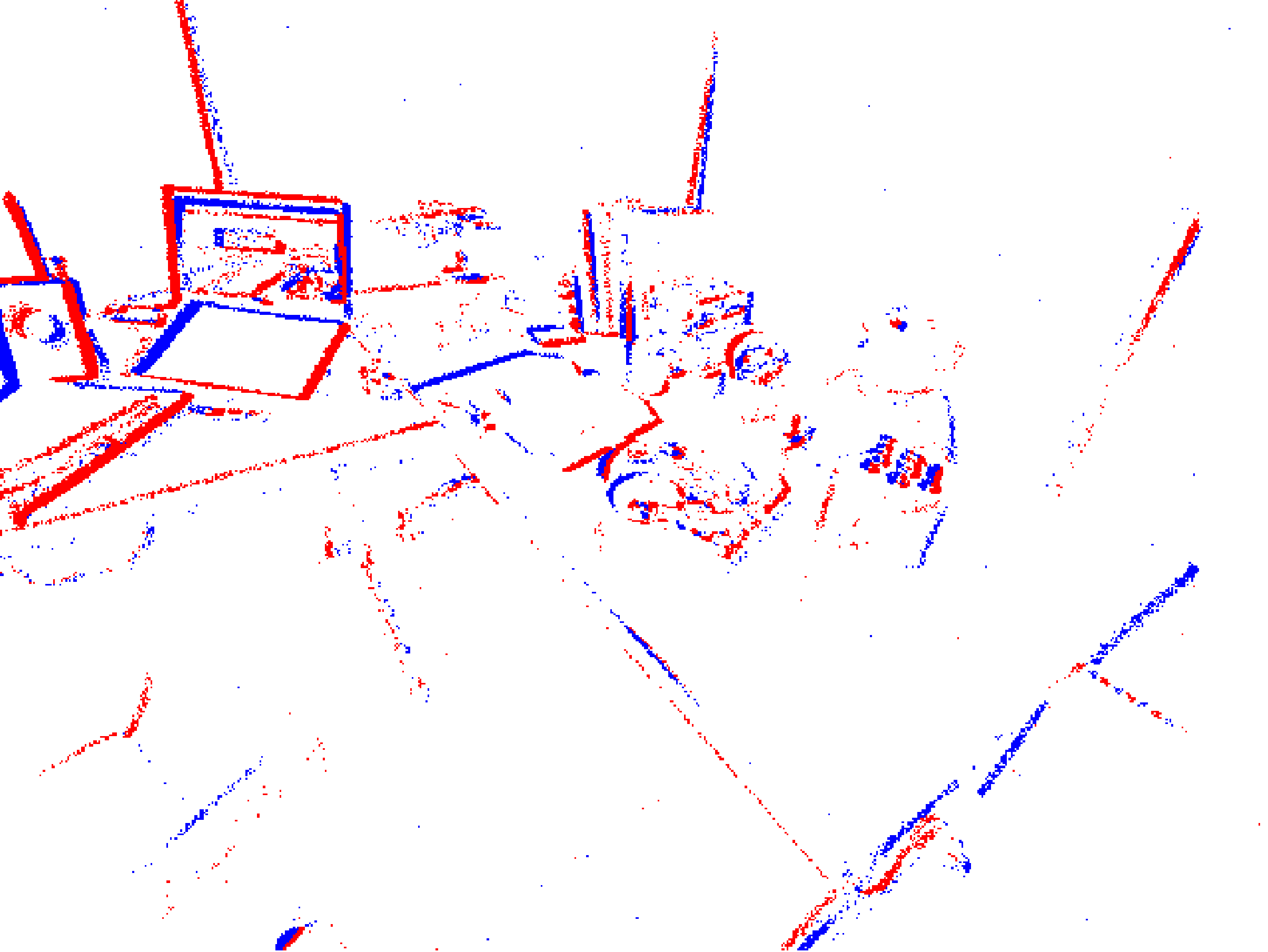}}
		\caption{An event frame}
	\end{subfigure}
	\qquad \quad \ 
	\begin{subfigure}[b]{0.375\linewidth}
		\centering
		\includegraphics[width=\textwidth]{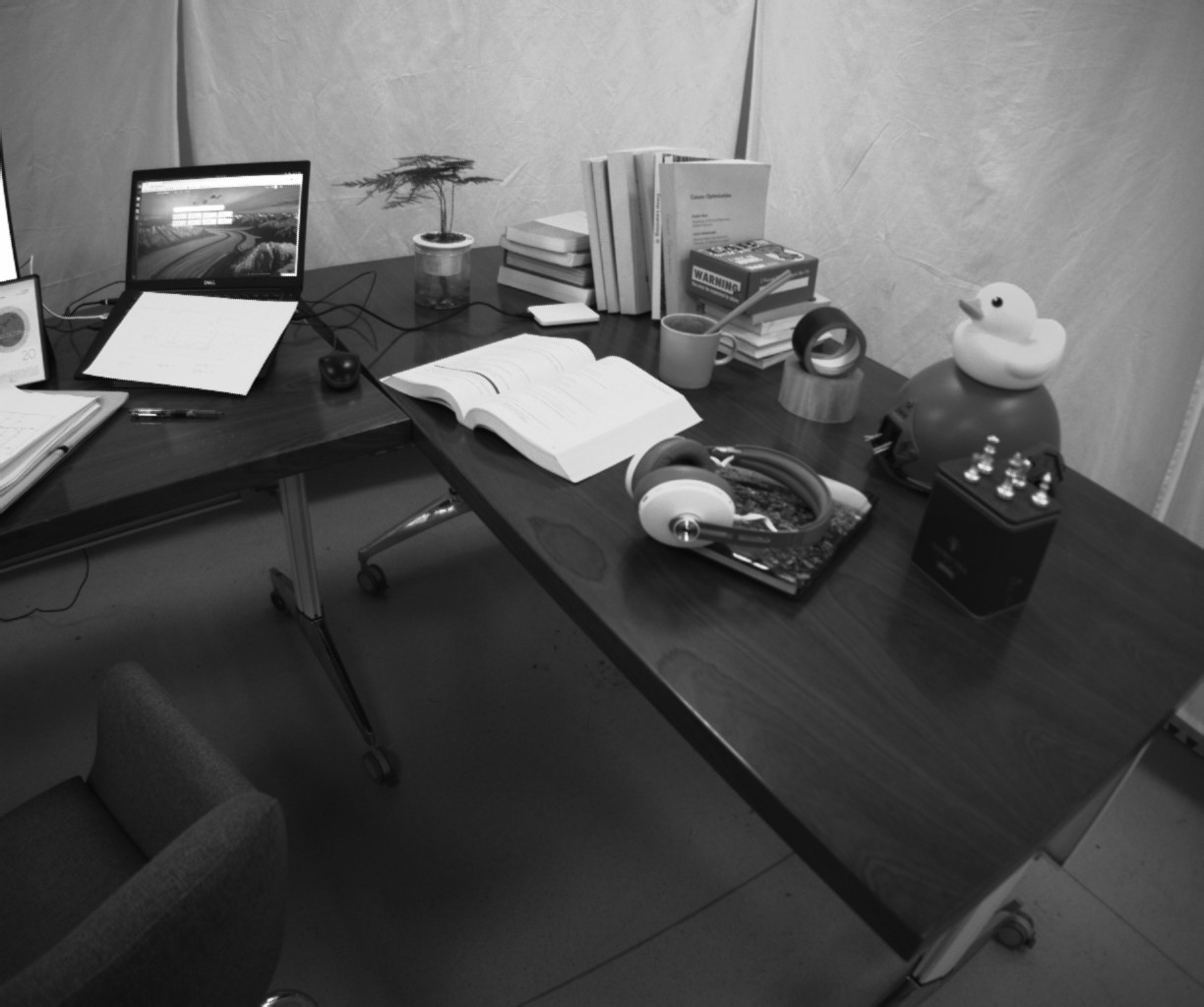}
		\caption{image. It is used for visualization only.}
	\end{subfigure}
	\\
	\vspace{1em}
	\begin{subfigure}[b]{0.4\linewidth}
		\centering
		\fbox{\includegraphics[width=\textwidth]{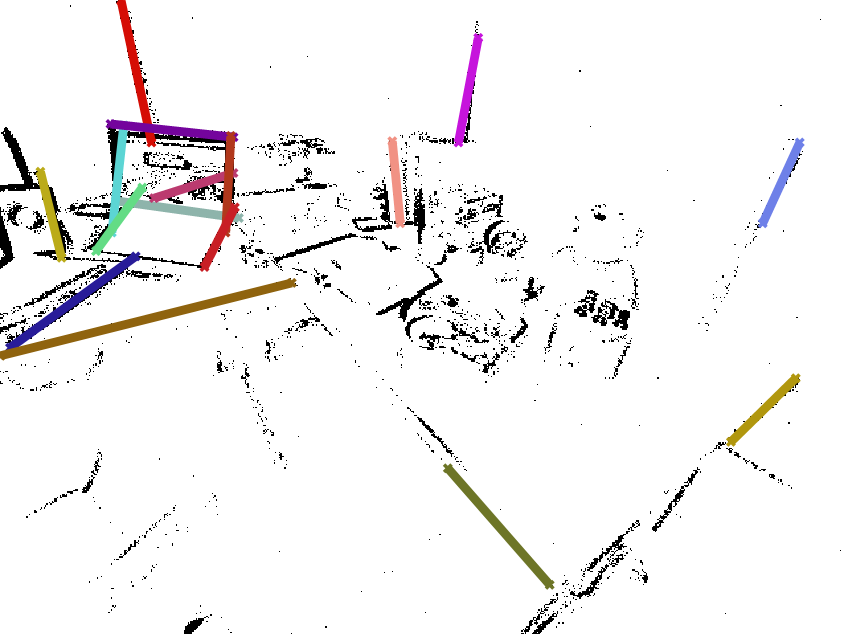}}
		\caption{line segment detection}
	\end{subfigure}
	\qquad
	\begin{subfigure}[b]{0.4\linewidth}
		\centering
		\fbox{\includegraphics[width=\textwidth]{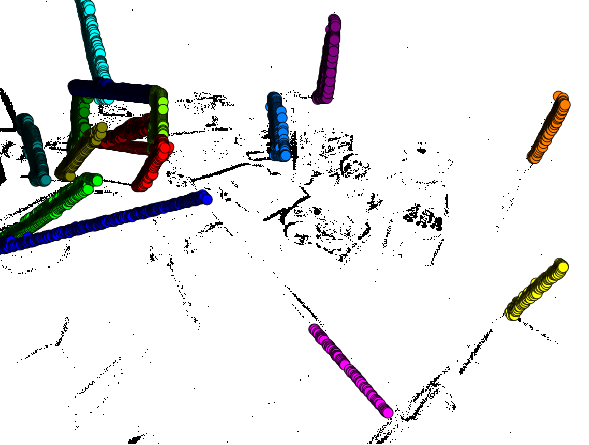}}
		\caption{line clusters}
	\end{subfigure}
	\\
	\caption{Line cluster extraction from the \emph{desk-normal} sequence in the VECtor dataset. (a) An event frame generated by accumulating events, where red and blue dots represent events with opposite polarities. (b) The corresponding image. (c) Results of line segment detection. (d) Line cluster extraction by associating events near the line segments. If an event is within $5$ pixels of a line segment, draw a circle at the event's position.}
	\label{fig:line-vector}
\end{figure*}

We use the GC-RANSAC~\cite{barath2022graph} as the robust estimation framework. 
An angular reprojection~\cite{kneip2014opengv,lee2019closed} threshold of $0.2^\circ$ is used for inlier selection, applied consistently throughout both the main iterations and the local refinement stages. The spherical radius $r$ in neighborhood graph construction is set as $50$. 
The main iterations stop once there is a $0.99$ probability that at least one outlier-free set has been sampled, or when the maximum of $1000$ iterations is reached. In the local refinement stage, the maximum number of inner RANSAC iterations is set to $20$.

The results of line cluster extraction are shown in \cref{fig:line-vector}. Events are accumulated into frames, which has a resolution of $640 \times 480$. Line segments are then extracted from the undistorted frames, and events located within $5$ pixels of these line segments are considered as part of the line clusters. 

We have also conducted a comparison with two methods with known angular velocities \cite{gao2023fivepoint,gao2024npoint}. Using the same dataset and settings described in Table~2 of our paper, we obtained the comparison results of estimated linear velocities reported in \cref{tab:real-rebuttal}. 
We can see that the solvers in \cite{gao2023fivepoint,gao2024npoint} have better results than our proposed full-DoF solvers.
This is reasonable, as the comparison methods leverage IMU readings to solve the rotational parameters of the motion.

\begin{table}[H]
	\centering
	\setlength{\tabcolsep}{3.1pt} 
	\begin{tabular}{|l|c|c|c|c|}
		\hline
		Seq. Name  & \scenario{ICCV23} \cite{gao2023fivepoint} & \scenario{CVPR24} \cite{gao2024npoint} & \scenario{IncBat} & \scenario{CopBat} \\ \hline
		\emph{desk}     & $23.5$ & $21.8$ & $23.0$ & $25.1$ \\ \hline
		\emph{mountain} & $18.2$ & $16.4$ & $17.5$ & $18.7$ \\ \hline
		\emph{sofa}     & $19.7$ & $18.9$ & $21.1$ & $20.6$ \\ \hline
	\end{tabular}
	\vspace{-6pt}
	\caption{Experiment results of the comparison methods. We report the median errors of recovered linear velocities $\varepsilon_{\text{lin}}$ (unit: degree). This table supplements Table 2 of our paper.}
	\label{tab:real-rebuttal}
\end{table}

\subsection{Discussion}
Our method shows promising results, it also has a few limitations.
(1) The proposed method is best-suited for scenes that contain long straight lines.
(2) The performance of certain solvers heavily relies on the quality of the extracted normal flow. 
(3) For short time intervals, assuming constant angular and linear velocities is reasonable. However, during aggressive or non-uniform motions, sudden velocity changes may violate this assumption. A more appropriate approach would involve using a continuous-time model, such as a cubic B-spline model~\cite{mueggler2018continuous}, to model the $\omg(t)$ and $\v(t)$.

There are several potential strategies to reduce the dependence on line and normal flow extraction. First, it is possible to detect event clusters without identifying line segments, which makes it easier to eliminate the need for frame accumulation. When integrating our solver in a hypothesis-and-test framework (such as RANSAC), we can then verify whether the events within the same cluster originate from a line. Second, normal flow can be calculated directly without accumulating frames. The primary focus of our work lies on the relative pose solver, which remains independent of the line/cluster detection and normal flow extraction method. Our method can ultimately be combined with any front-end method.


\end{document}